\newtheorem{theorem}{Theorem}[section]
\newtheorem{lemma}[theorem]{Lemma}
\newtheorem{proposition}[theorem]{Proposition}
\newtheorem{corollary}[theorem]{Corollary}
\newenvironment{proof}[1][Proof]{\begin{trivlist}
\item[\hskip \labelsep {\bfseries #1}]}{\end{trivlist}}
\newcommand{\qed}{\nobreak \ifvmode \relax \else
      \ifdim\lastskip<1.5em \hskip-\lastskip
      \hskip1.5em plus0em minus0.5em \fi \nobreak
      \vrule height0.75em width0.5em depth0.25em\fi}
\begin{document}
\title{A Confident Information First Principle for Parametric Reduction and Model Selection of Boltzmann Machines}

\author{Xiaozhao~Zhao,
        Yuexian~Hou,
        Dawei~Song,
        and~Wenjie~Li
\thanks{X. Zhao, Y. Hou, D. Song are with the School of Computer Science and Technology, Tianjin University, Tianjin, 300072 China. e-mail: (0.25eye@gmail.com; yxhou@tju.edu.cn; dawei.song2010@gmail.com). D. Song is also with the Department of Computing, The Open University, Milton Keynes, UK.}
\thanks{W. Li is with the Department of Computing, The Hong Kong Polytechnic University, Hung Hom, Kowloon, Hong Kong, China. e-mail: (cswjli@comp.polyu.edu.hk).}
}


\maketitle

\begin{abstract}
Typical dimensionality reduction (DR) methods are often data-oriented, focusing on directly reducing the number of random variables (features) while retaining the maximal variations in the high-dimensional data. In unsupervised situations, one of the main limitations of these methods lies in their dependency on the scale of data features. This paper aims to address the problem from a new perspective and considers model-oriented dimensionality reduction in parameter spaces of binary multivariate distributions.

Specifically, we propose a general parameter reduction criterion, called Confident-Information-First (CIF) principle, to maximally preserve confident parameters and rule out less confident parameters. Formally, the confidence of each parameter can be assessed by its contribution to the expected Fisher information distance within the geometric manifold over the neighbourhood of the underlying real distribution.

We then revisit Boltzmann machines (BM) from a model selection perspective and theoretically show that both the fully visible BM (VBM) and the BM with hidden units can be derived from the general binary multivariate distribution using the CIF principle. This can help us uncover and formalize the essential parts of the target density that BM aims to capture and the non-essential parts that BM should discard. Guided by the theoretical analysis, we develop a sample-specific CIF for model selection of BM that is adaptive to the observed samples. The method is studied in a series of density estimation experiments and has been shown effective in terms of the estimate accuracy.
\end{abstract}

\begin{IEEEkeywords}
Information Geometry, Boltzmann Machine, Parametric Reduction, Fisher Information
\end{IEEEkeywords}

\IEEEpeerreviewmaketitle

\section{Introduction}\label{sec:intro}
\IEEEPARstart{R}{ecently}, deep learning models (e.g., Deep Belief Networks (DBN)\cite{hinton06}, Stacked Denoising Auto-encoder \cite{Vincent2010Denoising}, Deep Boltzmann Machine (DBM) \cite{Salakhutdinov2012} and etc.) have drawn increasing attention due to their impressive empirical performance in various application areas, such as computer vision \cite{Bengio06greedy}\cite{ranzato07autoencoder}\cite{osindero07image}, natural language processing \cite{Collobert08nlp} and information retrieval \cite{Salakhutdinov07kernel}\cite{Salakhutdinov07sigir}.
Despite of these practical successes, there have been debates on the fundamental principle of the design and training of those deep architectures. In most situations, searching the parameter space for deep learning models is difficult.
To tackle this difficulty, \emph{unsupervised pre-training} has been introduced as an important process. In \cite{bengio2010why}, it has been empirically shown that the unsupervised pre-training could fit the network parameters in a region of the parameter space that could well capture the data distribution, thus alleviating generalization error of the trained deep architectures.

The process of pre-training aims to discover the latent representation of the input data based on the learnt generative model, from which we could regenerate the input data. A better generative model would generally lead to more meaningful latent representations.
From the density estimation point of view, pre-training can be interpreted as an attempt to recover a set of parameters for a generative model that describes the underlying distribution of the observed data.
Since Boltzmann machines (BM) are building blocks for many deep architectures (e.g., DBN and DBM), we will focus on a formal analysis of the essential parts of the target density that the BM can capture in model selection.

In practice, the datasets that we deal with are often high-dimensional. Thus we would require a model with high-dimensional parameter space in order to effectively depict the underlying real distribution.
Overfitting usually occur when the model is excessively complex with respect to a small dataset. On the other hand, if a large dataset is available, underfitting would occur when the model is too simple to capture the underlying trend of the data.
Moreover, this connection becomes more complicated if the observed samples contain noises.
Thus, to alleviate overfitting or underfitting, a basic model selection criterion is needed to adjust the complexity of the model with respect to the available observations (usually insufficient or perturbed by noises). Next, for density estimation, we will restate the model selection problem as parametric reduction on the parameter space of multivariate distributions, which could lead to our general parameter reduction criterion, i.e., the Confident-Information-First (CIF) principle.

Assuming there exists an universal parametric probabilistic model $S$ (with $n$ free parameters) that is general enough to represent all system phenomena, the goal of the parametric reduction is to derive a lower-dimensional sub-model $M$ (with $k<<n$ free parameters)
by reducing the number of free parameters in $S$. Note that the number of free parameters is adopted as a model complexity measure, which is in line with various model selection criteria (such as Akaike information criterion (AIC)\cite{aic1974}, Bayesian information criterion (BIC)\cite{bic1978} and etc).

\begin{figure}
  \centering
  \includegraphics[width=0.5\textwidth]{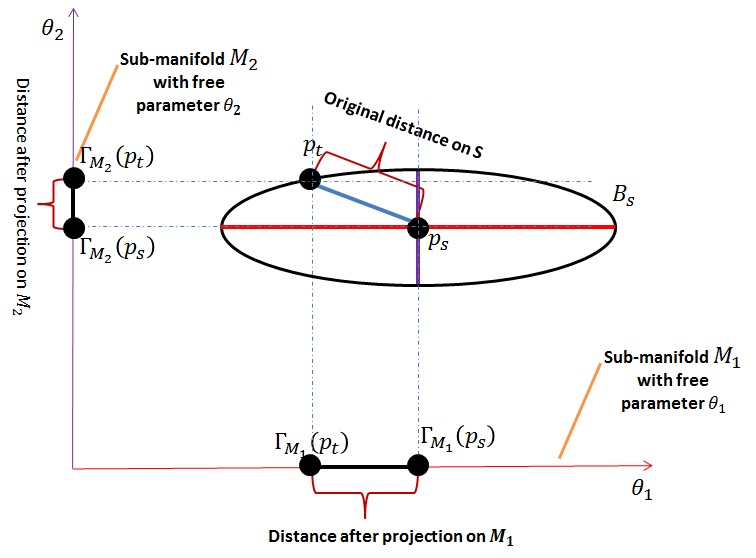}
  \caption{Illustration on parametric reduction: Let $S$ be a two-dimensionality manifold with two free parameters $\theta_1$ and $\theta_2$, and $M_1$ with free parameter $\theta_1$ and $M_2$ with free parameter $\theta_2$ are the submanifold of $S$; As an illustration in Euclidean space, we show $B_s$ (on which the true distribution $p_t$ located on) as the surface of a hyper-ellipsoid centered at sample distribution $p_s$ determined by the Fisher-Rao metric; Only part of the original distance between $p_t$ and $p_s$ ($p_t, p_s \in S$) can be preserved after projection on submanifold $M$; The preferred $M$ is the one that maximally preserves the original distance after projection.
  Note that the scale of the distances in Fig \ref{fig:distanceProjection} are shown as a demo, and are not exactly proportional to the real Riemann distances induced by Fisher-Rao metric}
  \label{fig:distanceProjection} 
\end{figure}
In this paper, we formalize the parametric reduction in the theoretical framework of information geometry (IG). In IG, the general model $S$ can be seen as a $n$-dimensionality manifold and $M$ is a smoothed submanifold of $S$. The number of free parameters in $M$ is restricted to be a constant $k$ ($k<<n$). Then, the major difficulty in the parametric reduction procedure is the choice of parameters to keep or to cut. In this paper, we propose to reduce parameters such that the original geometric structure of $S$ can be preserved as much as possible after projecting on the submanifold $M$.

Let $p_t, p_s\in S$ be the true distribution and the sampling distribution (maybe perturbed from $p_t$ by sampling bias or noises) respectively. It can be assumed that the true distribution $p_t$ is located somewhere in a $\varepsilon$-sphere surface $B_s$ centered at $p_s$, i.e., $B_s = \{p_t \in S | D(p_t, p_s) = \varepsilon\}$, where $D(\cdot, \cdot)$ denotes the distance measure on the manifold $S$, and $\varepsilon$ is a small number. This assumption is made without losing generality, since the $\varepsilon$ is a small variable. For a distribution $p$, the best approximation of $p$ on $M$ is the point $q$ that belongs to $M$ and is the closest to $p$ in terms of the distance measure, i.e., $q = \arg \min_{q'\in M} D(q', p)$, which is defined as the projection of $p$ onto $M$ (denoted as $\Gamma_M(p)$).

Then, the parametric reduction can be defined as the optimization problem to maximally preserve the expectation of the Fisher information distance with respect to the constraint of the parametric number, when projecting distributions from the parameter space of $S$ onto that of the reduced submanifold $M$:
\begin{equation}\label{eq:parametricreduction2}
\begin{aligned}
& \underset{M}{\text{maximize}}
& & \oint_{p_t \in B_s} D(\Gamma_M(p_t), \Gamma_M(p_s)) dB_s \\
& \text{subject to}
& & \text{$M$ has $k$ free parameters}
\end{aligned}
\end{equation}

Here, the Fisher information distance (FID), i.e., the Riemannian distance induced by the Fisher-Rao metric \cite{amari92igbm}, is adopted as the distance measure between two distributions, since it is shown to be the unique metric meeting a set of natural axioms for the distribution metric \cite{Amari93}\cite{gibilisco2010algebraic}\cite{chentsov1982statistical}, e.g., the invariant property with respect to reparametrizations and the monotonicity with respect to the random maps on variables.
Let $\xi$ be the distribution parameters. For two close distributions $p_1$ and $p_2$ with parameters $\xi_1$ and $\xi_2$, the Fisher information distance between $p_1$ and $p_2$ is:
\begin{equation}\label{eq:fisherdistance}
  D(p_1, p_2)=\sqrt{(\xi_1-\xi_2)^T G_\xi (\xi_1-\xi_2)}
\end{equation}
where $G_\xi$ is the Fisher information matrix \cite{Amari93}.

Note that the solution to this optimization problem (Equation \ref{eq:parametricreduction2}) is not unique, since we can assign different fixed values for non-free parameters in $M$. Intuitively, to determine the appropriate values for non-free parameters, our best choice is the $M$ that intersects at $p_s$. However, in general cases where $p_s$ is NOT specified in advance, it is natural to assign non-free parameters to a neutral value (e.g., zero). This treatment is used by the general CIF (see Section \ref{sec:cif}). If $p_s$ is specified in advance, we can, in principle, further select a $M$ as close to $p_s$ as possible. It turn out that we can develop a sample-specific CIF w.r.t given samples (see Section \ref{sec:cd-cif}).

The rationality of maximally preserving the Fisher information distance can also be interpreted from the  maximum-likelihood (ML) estimation point of view. Let $\hat{\xi}$ be the ML estimators for $\xi$.
The asymptotic normality of ML estimation implies that the distribution of $\hat{\xi}$ is the normal distribution with mean $\xi$ and covariance $\Sigma$, i.e.,
\begin{equation}\label{eq:normalapproximation}
  f(\hat{\xi})\sim \mathcal{N}(\xi, \Sigma) = \frac{1}{Z} exp\{-\frac{1}{2}(\xi-\hat{\xi})^T \Sigma^{-1} (\xi-\hat{\xi})\}
\end{equation}
where the inverse of $\Sigma$ can be asymptotically estimated using the Fisher information matrix $G_\xi$, as suggested by the Cram\'{e}r-Rao bound \cite{rao45attainable} and the asymptotic normality of ML estimation. From the Fisher information distance given in Equation \ref{eq:fisherdistance}, the exponent part of Equation \ref{eq:normalapproximation} is just the opposite of the half squared Fisher information distance between two distributions $p$ and $\hat{p}$ determined by the close parameters $\xi$ and $\hat{\xi}$, respectively. Hence a large Fisher information distance means a lower likelihood. It turns out that, in density estimates, maximally preserving the expected Fisher information distance after the projection $\Gamma_M$ (Equation \ref{eq:parametricreduction2}) is equivalent to maximally preserving the likelihood-structure among close distributions.
In supervised learning (e.g., classification), maximally preserving FID can also effectively preserve the likelihood-structure among different class densities (the underlying distributions of classes), which is beneficial against sample noises.
Recall that sample noises always reduce the FID among class densities in a statistical sense, which lead to the reduced discrimination marginality between two class densities. Hence, for noisy data, the model that maximally preserving FID can capture the dominant discrimination between class densities.

To solve the optimization problem in Equation \ref{eq:parametricreduction2}, we propose a parameter reduction criterion called the \emph{Confident-Information-First} (CIF) principle, described as follows.
The Fisher information distance $D(p_t, p_s)$ can be decomposed into the distances of two orthogonal parts \cite{Amari93}.
Moreover, it is possible to divide the system parameters in $S$ into two categories (corresponding to the two decomposed distances), \textit{i.e.}, the parameters with ``major'' variations and the parameters with ``minor'' variations, according to their contributions to the whole information distance. The former refers to parameters that are important for reliably distinguishing the true distribution from the sampling distribution, thus considered as ``confident". On the other hand, the parameters with minor contributions can be considered as less reliable.
Hence, the CIF principle can be stated as parametric reduction that preserves the confident parameters and rules out less confident parameters. We will theoretically show that CIF leads to an optimal submanifold $M$ in terms of the optimization problem defined in Equation \ref{eq:parametricreduction2}.
It is worth emphasizing that the proposed CIF as a principle of parametric reduction is fundamentally different from the traditional feature reduction (or feature extraction) methods \cite{Fodor02asurvey,lee07nonlinear}. The latter focus on directly reducing the dimensionality on feature space by retaining maximal variations in the data, e.g., Principle Components Analysis (PCA) \cite{abdi10pcareview}, while CIF offers a principled method to deal with high-dimensional data in the parameter spaces by a strategy that is derived from the first principle \footnote{The Fisher-Rao metric is considered as the first principle to measure the distance between distributions since it is the unique metric meeting a set of natural axioms for the distribution metric, as stated earlier.}, independent of the scales of features.

The main contributions of this paper are:
\begin{enumerate}
  \item We incorporate the Fisher information distance into the modelling of the intrinsic variations in the data that give rise to the desired model in the framework of IG.
  \item We propose a CIF principle for parametric reduction to maximally preserve the confident parameters and ruling out less confident ones.
  \item For binary multivariate distributions, we theoretically show that CIF could analytically lead to an optimal submanifold w.r.t. the parametric reduction problem in Equation \ref{eq:parametricreduction2}.
  \item The utility of CIF, i.e., the derivation of probabilistic models, is illustrated by revisiting the Boltzmann machines (BM). We show by examples that some existing probabilistic models, e.g., the fully visible BM (VBM) and the BM with hidden units, comply with the CIF principle and can be derived from it.
  \item Given certain samples, we propose a sample-specific CIF-based model selection scheme for the Bolzmann machines. It leads to a significant improvement in a series of density estimation experiments.
\end{enumerate}


\section{Theoretical Foundations of IG}\label{sec:theoryIG}
In this section, we introduce and develop the theoretical foundations of IG\cite{Amari93} for the manifold $S$ of binary multivariate distributions with a given number of variables $n$, i.e., the open simplex of all probability distributions over binary vector $x \in \{0,1\}^{n}$. This will lay the foundation for our theoretical deviation of the \emph{CIF}.
\subsection{Notations for Manifold S}\label{sec:def}
In IG, a family of probability distributions is considered as a differentiable manifold with certain parametric coordinate systems. In the case of binary multivariate distributions, four basic coordinate systems are often used \cite{Amari93}\cite{hou2013}: $p$-coordinates, $\eta$-coordinates, $\theta$-coordinates and the mixed $\zeta$-coordinates. The $\zeta$-coordinates is of vital importance for our analysis.

For the $p$-coordinates $[p]$, the probability distribution over $2^n$ states of $x$ can be completely specified by any $2^n-1$ positive numbers indicating the probability of the corresponding exclusive states on $n$ binary variables. For example, the $p$-coordinates of $n=2$ variables could be $[p]=(p_{01}, p_{10}, p_{11})$. Note that IG requires all probability terms to be positive \cite{Amari93}.
For simplicity, we use the capital letters $I,J,\dots$ to index the coordinate parameters of probabilistic distribution. An index $I$ can be regarded as a subset of $\{1,2,\dots,n\}$. Additionally, $p_I$ stands for the probability that all variables indicated by $I$ equal to one and the complemented variables are zero. For example, if $I=\{1,2,4\}$ and $n=4$, we have:
$$p_I=p_{1101}=Prob(x_1=1,x_2=1,x_3=0,x_4=1)$$ Note that the null set can also be a legal index of the $p$-coordinates, which indicates the probability that all variables are zero, denoted as $p_{0 \dots 0}$.

The $\eta$-coordinates $[\eta]$ are defined by:
\begin{equation}\label{eq:etacoordinate}
 \eta_I=E[X_I]=Prob\{\prod_{i\in I}x_i = 1\}
\end{equation}
where the value of $X_I$ is given by $\prod_{i\in I}x_i$ and the expectation is taken with respect to the probability distribution over $x$. Grouping the coordinates by their orders, the $\eta$-coordinates are denoted as $[\eta]=(\eta^1_i, \eta^2_{ij},\dots, \eta^n_{1,2...n})$, where the superscript indicates the order number of the corresponding parameter. For example, $\eta^2_{ij}$ denotes the set of all $\eta$ parameters with the order number two.

The $\theta$-coordinates (natural coordinates) $[\theta]$ are defined by:
\begin{equation}\label{eq:thetacoordinate}
 \log{p(x)}=\sum_{I\subseteq\{1,2,\dots,n\}, I\neq NullSet}{\theta^I X_I} - \psi(\theta)
\end{equation}
where $\psi(\theta)=\log(\sum_x{exp\{\sum_I{\theta^I X_I(x)}\}})$ is the cumulant generating function and its value equals to $-\log Prob\{x_i=0, \forall i\in \{1,2,...,n\}\}$. By solving the linear system \ref{eq:thetacoordinate}, we have $\theta^I = \sum_{K \subseteq I} (-1)^{|I-K|} log(p_{K})$. The $\theta$-coordinate is denoted as $[\theta]=(\theta^{i}_1, \theta^{ij}_2,\dots, \theta^{1,...,n}_n)$, where the subscript indicates the order number of the corresponding parameter.
Note that the order indices locate at different positions in $[\eta]$ and $[\theta]$ following the convention in \cite{amari92igbm}.

The relation between coordinate systems $[\eta]$ and $[\theta]$ is bijective.
More formally, they are connected by the Legendre transformation:
\begin{equation}\label{eq:trans_eta_theta}
 \theta^I=\frac{\partial \phi(\eta)}{\partial \eta_I}, \eta_I=\frac{\partial \psi(\theta)}{\partial \theta^I}
\end{equation}
where $\psi(\theta)$ is given in Equation \ref{eq:thetacoordinate} and $\phi(\eta)=\sum_x p(x;\eta) \log p(x;\eta)$ is the negative of entropy. It can be shown that $\psi(\theta)$ and $\phi(\eta)$ meet the following identity \cite{Amari93}:
\begin{equation}\label{eq:legedre}
 \psi(\theta)+\phi(\eta)-\sum{\theta^I\eta_I}=0
\end{equation}

The $l$-mixed $\zeta$-coordinates $[\zeta]_l$ are defined by:
\begin{equation}\label{eq:lmixedcoordinates}
\!\!\!\![\zeta]_l\!\!=\!\![\eta^{l-}, \theta_{l+}]\!\!=\!\!(\eta^1_i, \eta^2_{ij},\dots, \eta^l_{i,j,\dots,k}, \theta^{i,j,\dots,k}_{l+1},\dots, \theta^{1,...,n}_n)
\end{equation}
where the first part consists of $\eta$-coordinates with order less or equal to $l$ and the second part consists of $\theta$-coordinates with order greater than $l$, $l \in \{1,...,n-1\}$.

\subsection{Fisher Information Matrix for Parametric Coordinates} \label{sec:fisherinformationmatrix}
For a general coordinate system $[\xi]$, the $i$-th row and $j$-th column element of the Fisher information matrix for $[\xi]$ (denoted by $G_{\xi}$) is defined as the covariance of the scores of $[\xi_i]$ and $[\xi_j]$ \cite{rao45attainable}:
 $$g_{ij}=E[\frac{\partial \log p(x;\xi)}{\partial \xi_i}\cdot \frac{\partial \log p(x;\xi)}{\partial \xi_j}]$$
under the regularity condition that the partial derivatives exist.
The Fisher information measures the amount of information in the data that a statistic carries about the\linebreak unknown parameters \cite{kass89asymptotic}.
The Fisher information matrix is of vital importance to our analysis, because the inverse of Fisher information matrix gives an asymptotically tight lower bound to the covariance matrix of any unbiased estimate for the considered parameters \cite{rao45attainable}. 
Another important concept related to our analysis is the orthogonality defined by Fisher information. Two coordinate parameters $\xi_i$ and $\xi_j$ are called orthogonal if and only if their Fisher information vanishes, \textit{i.e}., $g_{ij}=0$, meaning that their influences on the log likelihood function are uncorrelated.

The Fisher information for $[\theta]$ can be rewritten as $g_{IJ}=\frac{\partial^2 \psi(\theta)}{\partial \theta ^I \partial \theta ^J}$, and for $[\eta]$, it is $g^{IJ}=\frac{\partial^2 \phi(\eta)}{\partial \eta_I \partial \eta_J}$
\cite{Amari93}. Let $G_\theta=(g_{IJ})$ and $G_\eta =(g^{IJ})$ be the Fisher information matrices for $[\theta]$ and $[\eta]$, respectively. It can be shown that $G_\theta$ and $G_\eta$ are mutually inverse matrices, \textit{i.e}., $\sum_J{g^{IJ}g_{JK}}=\delta^I_K$, where $\delta^I_K=1$ if $I=K$ and zero otherwise \cite{Amari93}.
In order to generally compute $G_\theta$ and $G_\eta$, we develop the following Propositions \ref{prop:fishermatrix} \linebreak and \ref{prop:fishermatrix_eta}. Note that Proposition \ref{prop:fishermatrix} is a generalization of Theorem 2 in \cite{amari92igbm}.
\begin{proposition}\label{prop:fishermatrix}
The Fisher information between two parameters $\theta^I$ and $\theta^J$ in $[\theta]$, is given by:
\begin{equation}\label{eq:proposiationFisherMetrictheta}
 g_{IJ}(\theta)=\eta_{I\cup J}-\eta_I \eta_J
\end{equation}
\end{proposition}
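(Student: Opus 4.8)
The plan is to compute the Fisher information $g_{IJ}(\theta)$ directly from its covariance definition, exploiting the fact that in the $\theta$-coordinates the score functions take a particularly simple form. Recall from the excerpt that the Fisher information can be written either as the second derivative of the cumulant generating function, $g_{IJ}=\partial^2\psi(\theta)/\partial\theta^I\partial\theta^J$, or as the covariance of scores $g_{IJ}=E[(\partial\log p/\partial\theta^I)(\partial\log p/\partial\theta^J)]$. I would use the latter, since it exposes the structure most transparently.

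First I would differentiate the defining relation $\log p(x)=\sum_K\theta^K X_K(x)-\psi(\theta)$ (Equation \ref{eq:thetacoordinate}) with respect to $\theta^I$. Because $\partial\psi/\partial\theta^I=\eta_I$ by the Legendre relation (Equation \ref{eq:trans_eta_theta}), the score reduces to $\partial\log p/\partial\theta^I=X_I-\eta_I$. Substituting this into the covariance form yields $g_{IJ}=E[(X_I-\eta_I)(X_J-\eta_J)]=E[X_I X_J]-\eta_I\eta_J$, where I have used $E[X_I]=\eta_I$ from the definition of the $\eta$-coordinates (Equation \ref{eq:etacoordinate}).

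The key remaining step, and the only place where the binary nature of the variables enters, is the simplification of the cross-moment $E[X_I X_J]$. Since each variable satisfies $x_i\in\{0,1\}$, it is idempotent ($x_i^2=x_i$), so the product of the two monomials collapses: $X_I X_J=\prod_{i\in I}x_i\prod_{j\in J}x_j=\prod_{k\in I\cup J}x_k=X_{I\cup J}$. Hence $E[X_I X_J]=E[X_{I\cup J}]=\eta_{I\cup J}$, and the claimed identity $g_{IJ}(\theta)=\eta_{I\cup J}-\eta_I\eta_J$ follows at once.

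I do not anticipate a serious obstacle: the computation is short once the score is recognized to equal $X_I-\eta_I$. The one point requiring care is the indicator-product identity $X_I X_J=X_{I\cup J}$, which rests essentially on the idempotency of binary variables and would fail for general integer- or real-valued variables. Equivalently, one could reach the same conclusion by writing $g_{IJ}=\partial\eta_I/\partial\theta^J$ and applying the log-derivative trick $\partial p/\partial\theta^J=p\,(X_J-\eta_J)$; both routes terminate at the same cross-moment simplification, so I would present whichever is more consistent with the notation established earlier.
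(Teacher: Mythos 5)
Your proof is correct and takes essentially the same route as the paper: the paper computes $g_{IJ}=\partial^2\psi(\theta)/\partial\theta^I\partial\theta^J=\partial\eta_I/\partial\theta^J$ and expands this as $\sum_x X_I(x)\,[X_J(x)-\eta_J]\,p(x;\theta)$ --- precisely the log-derivative alternative you mention at the end --- before finishing with the same cross-moment simplification $E[X_IX_J]=\eta_{I\cup J}$ via idempotency of binary variables. Your covariance-of-scores starting point differs only cosmetically, since the paper's integrand is that same covariance once one notes the score $X_J-\eta_J$ has zero mean.
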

\begin{proof}
in Appendix \ref{appendix:thetafisher}.
\end{proof}

\begin{proposition}\label{prop:fishermatrix_eta}
The Fisher information between two parameters $\eta_I$ and $\eta_J$ in $[\eta]$, is given by:
\begin{equation}\label{eq:proposiationFisherMetriceta}
 g^{IJ}(\eta)=\sum_{K\subseteq I\cap J}{(-1)^{|I-K|+|J-K|} \cdot \frac{1}{p_{K}}}
\end{equation}
where $|\cdot|$ denotes the cardinality operator.
\end{proposition}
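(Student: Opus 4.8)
The plan is to compute $g^{IJ}(\eta)$ directly from its definition as the Hessian of the negative entropy, $g^{IJ}=\partial^2\phi(\eta)/\partial\eta_I\partial\eta_J$, exploiting the Legendre duality recorded in Equation \ref{eq:trans_eta_theta}. Since $\theta^I=\partial\phi(\eta)/\partial\eta_I$, the entry $g^{IJ}$ equals $\partial\theta^I/\partial\eta_J$. It therefore suffices to differentiate the closed form for $\theta^I$ in terms of the $p$-coordinates, namely $\theta^I=\sum_{K\subseteq I}(-1)^{|I-K|}\log p_K$ (obtained by solving the linear system in Equation \ref{eq:thetacoordinate}), with respect to $\eta_J$. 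Applying the chain rule gives $g^{IJ}=\sum_{K\subseteq I}(-1)^{|I-K|}\frac{1}{p_K}\frac{\partial p_K}{\partial\eta_J}$, so the entire problem reduces to evaluating $\partial p_K/\partial\eta_J$.

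To obtain $\partial p_K/\partial\eta_J$, I would first make explicit the link between the $p$- and $\eta$-coordinates. By the definition in Equation \ref{eq:etacoordinate}, $\eta_I=\mathrm{Prob}\{\prod_{i\in I}x_i=1\}=\sum_{L\supseteq I}p_L$, a zeta-type transform summing over all supersets of $I$ in the subset lattice. Möbius inversion then yields the dual relation $p_K=\sum_{L\supseteq K}(-1)^{|L-K|}\eta_L$, from which $\partial p_K/\partial\eta_J=(-1)^{|J-K|}$ whenever $K\subseteq J$, and $0$ otherwise.

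Substituting this back, the indicator $[K\subseteq J]$ restricts the summation index to satisfy both $K\subseteq I$ and $K\subseteq J$, i.e.\ $K\subseteq I\cap J$, and collecting the two sign factors gives exactly
\[
 g^{IJ}(\eta)=\sum_{K\subseteq I\cap J}(-1)^{|I-K|+|J-K|}\frac{1}{p_K},
\]
as claimed. This mirrors, on the $\eta$-side, the differentiation argument underlying Proposition \ref{prop:fishermatrix}, and is self-contained once the closed form for $\theta^I$ is available.

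The one place requiring care — and the main obstacle — is the bookkeeping for the null index. The manifold carries only $2^n-1$ free $\eta$-parameters, since the order-zero coordinate $\eta_\emptyset=\sum_x p(x)=1$ is fixed by normalization; consequently $p_{0\cdots0}=p_\emptyset$ is a \emph{dependent} quantity, yet it enters $\theta^I$ through the $K=\emptyset$ term $(-1)^{|I|}\log p_\emptyset$. I would verify that, with $\eta_\emptyset$ held constant, the Möbius formula still gives $\partial p_\emptyset/\partial\eta_J=(-1)^{|J|}$, so that the uniform expression $\partial p_K/\partial\eta_J=(-1)^{|J-K|}[K\subseteq J]$ remains valid including $K=\emptyset$. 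This also ensures that when $I\cap J=\emptyset$ the formula correctly collapses to the single term $(-1)^{|I|+|J|}/p_{0\cdots0}$. Checking the sign conventions and the direction of the lattice inversion is the only genuinely delicate part; the remaining manipulations are routine.
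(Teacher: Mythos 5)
Your proof is correct and follows essentially the same route as the paper's: both reduce $g^{IJ}$ to $\partial\theta^I/\partial\eta_J$ via the Legendre relations and then evaluate this derivative by the chain rule through the $p$-coordinates, using $\theta^I=\sum_{K\subseteq I}(-1)^{|I-K|}\log p_K$ together with the M\"obius-inverted relation $p_K=\sum_{L\supseteq K}(-1)^{|L-K|}\eta_L$. Your explicit handling of the null index and the normalization constraint $\eta_\emptyset=1$ is a detail the paper leaves implicit, but it does not alter the argument.
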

\begin{proof}
in Appendix \ref{appendix:etafisher}. 
\end{proof}

We take the probability distribution with three variables for example. Based on Equation \ref{eq:proposiationFisherMetriceta}, the Fisher information between $\eta_I$ and $\eta_J$ can be calculated, e.g., $g^{IJ}=\frac{1}{p_{000}}+\frac{1}{p_{010}}$ if $I=\{1,2\}$ and $J=\{2,3\}$, $g^{IJ}=-(\frac{1}{p_{000}}+\frac{1}{p_{010}}+\frac{1}{p_{100}}+\frac{1}{p_{110}})$ if $I=\{1,2\}$ and $J=\{1,2,3\}$, and etc.

Based on $G_\eta$ and $G_\theta$, we can calculate the Fisher information matrix $G_\zeta$ for the $[\zeta]_l$.
\begin{proposition}\label{prop:fishermatrix_mix}
The Fisher information matrix $G_\zeta$ of $[\zeta]_l$ is given by:
 \begin{equation}\label{eq:estimationerror}
 G_{\zeta}= \left(
    \begin{array}{cc}
     A & 0 \\
     0 & B \\
    \end{array}
    \right)
 \end{equation}
where $A=((G_\eta^{-1})_{I_\eta})^{-1}$, $B=((G_\theta^{-1})_{J_\theta})^{-1}$, $G_\eta$ and $G_\theta$ are the Fisher information matrices of $[\eta]$ and $[\theta]$, respectively, $I_\eta$ is the index set of the parameters shared by $[\eta]$ and $[\zeta]_l$, \textit{i.e}., $\{\eta^1_i,..., \eta^l_{i,j,...,k}\}$, and $J_\theta$ is the index set of the parameters shared by $[\theta]$ and $[\zeta]_l$, \textit{i.e}., $\{\theta^{i,j,...,k}_{l+1},\dots, \theta^{1,...,n}_n\}$.
\end{proposition}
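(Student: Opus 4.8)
The plan is to treat $G_\zeta$ as the Fisher metric written in a new coordinate system and obtain it from $G_\theta$ by the tensorial change-of-variables rule, then read off the blocks. First I would partition the coordinates into a low-order block (orders $\le l$) and a high-order block (orders $> l$), so that $[\theta]=(\theta^{(1)},\theta^{(2)})$ and $[\eta]=(\eta^{(1)},\eta^{(2)})$, and write $G_\theta$ in the corresponding block form $\left(\begin{smallmatrix}P & Q\\ Q^T & R\end{smallmatrix}\right)$, where by the identity $g_{IJ}=\partial^2\psi/\partial\theta^I\partial\theta^J$ each block is a Hessian of $\psi$. The mixed coordinate is $[\zeta]_l=(\eta^{(1)},\theta^{(2)})$, so its high-order half is shared verbatim with $[\theta]$ (these indices form $J_\theta$) while its low-order half is swapped for the $\eta$-dual (these form $I_\eta$).

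The key step is computing the Jacobian $J=\partial[\theta]/\partial[\zeta]_l$. Using the Legendre relation $\eta^{(1)}=\partial\psi/\partial\theta^{(1)}$ and differentiating at fixed $\theta^{(2)}$, I get $d\eta^{(1)}=P\,d\theta^{(1)}+Q\,d\theta^{(2)}$; since $\theta^{(2)}$ is carried through unchanged, this inverts to the upper-triangular Jacobian $J=\left(\begin{smallmatrix}P^{-1} & -P^{-1}Q\\ 0 & I\end{smallmatrix}\right)$. The Fisher metric, being a $(0,2)$-tensor, transforms as $G_\zeta=J^T G_\theta J$; carrying out the block multiplication collapses the cross terms and yields $G_\zeta=\mathrm{diag}\bigl(P^{-1},\,R-Q^T P^{-1}Q\bigr)$. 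This already establishes that the off-diagonal blocks vanish, which is exactly the Fisher-orthogonality of $\eta^{(1)}$ and $\theta^{(2)}$ in the mixed coordinates.

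It then remains to match the two diagonal blocks with the stated expressions, using that $G_\eta$ and $G_\theta$ are mutually inverse. The submatrix of $G_\eta^{-1}=G_\theta$ indexed by $I_\eta$ is exactly the low-order block $P$, so $((G_\eta^{-1})_{I_\eta})^{-1}=P^{-1}=A$. For the high-order block I would invoke the Schur-complement form of the block inverse, which gives the $J_\theta$-block of $G_\theta^{-1}=G_\eta$ as $(R-Q^T P^{-1}Q)^{-1}$, hence $((G_\theta^{-1})_{J_\theta})^{-1}=R-Q^T P^{-1}Q=B$. I expect the main obstacle to be the bookkeeping in the Jacobian computation — correctly justifying, via the implicit dependence of $\theta^{(1)}$ on $(\eta^{(1)},\theta^{(2)})$, that the lower-left block is zero and the upper blocks are $P^{-1}$ and $-P^{-1}Q$ — since a sign or transpose slip there would propagate into both diagonal blocks; the subsequent Schur-complement identification is routine once the block inverse of $G_\theta$ is written down.
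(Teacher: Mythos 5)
Your proposal is correct, and it proves the proposition by a genuinely different route than the paper. The paper's proof does not compute a Jacobian at all: it first invokes the orthogonality result of Nakahara--Amari to assert that the off-diagonal blocks of $G_\zeta$ vanish (a cited fact, not derived), and then identifies the diagonal blocks by a Cram\'{e}r--Rao argument --- the asymptotic covariance bound for estimating a given parameter is the corresponding entry of the inverse Fisher matrix in \emph{any} parametrization containing it, so $(G_\zeta^{-1})_{I_\zeta}=(G_\eta^{-1})_{I_\eta}$ and $(G_\zeta^{-1})_{J_\zeta}=(G_\theta^{-1})_{J_\theta}$, after which blockwise inversion gives the claim. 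Your argument instead transforms the metric tensorially, $G_\zeta=J^T G_\theta J$ with $J=\left(\begin{smallmatrix}P^{-1} & -P^{-1}Q\\ 0 & I\end{smallmatrix}\right)$ obtained from the Legendre relation $\eta^{(1)}=\partial\psi/\partial\theta^{(1)}$, and the block multiplication yields $\mathrm{diag}\bigl(P^{-1},\,R-Q^TP^{-1}Q\bigr)$; matching with the stated blocks then follows from $G_\eta=G_\theta^{-1}$ and the Schur-complement form of the block inverse. This buys two things the paper's proof lacks: the orthogonality $C=D=0$ is \emph{derived} rather than cited, and the estimation-theoretic invariance step (which the paper leaves at an intuitive level) is replaced by explicit linear algebra. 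What the paper's route buys in exchange is brevity and a statistical reading of the blocks as tight covariance bounds. The only gaps in your write-up are cosmetic: you should note that $P$ is invertible (it is a principal block of the positive-definite $G_\theta$) and that $P$, being a Hessian block of $\psi$, is symmetric, which is what lets $J^T$ collapse the cross terms.
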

\begin{proof}
in Appendix \ref{appendix:mixfisher}.
\end{proof}

\section{The General CIF Principle}\label{sec:cif}
The general manifold $S$ of all probability distributions over binary vector $x \in \{0,1\}^{n}$ could be exactly represented using the $2^{n}-1$ parametric coordinates. Given a target distribution $q(x)\in S$, we consider the problem of realizing it by a lower-dimensionality submanifold $M$. This is defined as the problem of parametric reduction for multivariate binary distributions.

In this section, we will formally illuminate the general CIF for parametric reduction. Intuitively, if we can construct a coordinate system so that the confidences of its parameters entail a natural hierarchy, in which high confident parameters are significantly distinguished from and orthogonal to lowly confident ones, then we can conveniently implement CIF by keeping the high confident parameters unchanged and setting the lowly confident parameters to neutral values. As described in Section \ref{sec:intro}, the confidence of parameters should be assessed according to their contributions to the expected information distance. Therefore, the choice of coordinates in CIF is crucial to its usage. This strategy is infeasible in terms of $p$-coordinates, $\eta$-coordinates or $\theta$-coordinates, since the orthogonality condition cannot hold in these coordinate systems. In this section, we will show that the $l$-mixed-coordinates $[\zeta]_l$ meets the requirement of CIF.

To grasp an intuitive picture for the general CIF strategy and its significance w.r.t mixed-coordinates $[\zeta]_l$, we will first show that the $l$-mixed-coordinates $[\zeta]_l$ meets the requirement of CIF in typical distributions that generate real-world datasets. Then we will prove that CIF could lead to an optimal submanifold w.r.t. the parametric reduction problem in Equation \ref{eq:parametricreduction2}, in general cases.

\subsection{The CIF in Typical Distributions}\label{sec:ciftypical}
To facilitate our analysis, we make a basic assumption on the underlying distributions $q(x)$ that at least $(2^n-2^{n/2})$ $p$-coordinates are of the scale $\epsilon$, where $\epsilon$ is a sufficiently small value. Thus, residual $p$-coordinates (at most $2^{n/2}$) are all significantly larger than zero (of scale $\Theta(1/2^{(n/2)})$), and their sum approximates one. Note that these assumptions are common situations in real-world data collections \cite{highdimensionalstatistics}, since the frequent (or meaningful) patterns are only a small fraction of all of the\linebreak system states.

Next, we introduce a small perturbation $\Delta p$ to the $p$-coordinates $[p]$ for the true distribution $q(x)$. The perturbed distribution is denoted as $q'(x)$. For $p$-coordinates that are significantly larger than zero, the scale of each fluctuation $\Delta p_I$ is assumed to be proportional to the standard variation of corresponding $p$-coordinate $p_I$ by some small coefficients (upper bounded by a constant $a$), which can be approximated by the inverse of the square root of its Fisher information via the Cram\'{e}r--Rao bound. It turns out that we can assume the perturbation $\Delta p_I$ to be $a\sqrt{p_I}$. For $p$-coordinates with a small value (approximates zero), the scale of each fluctuation $\Delta p_I$ is assumed to be proportional to $a p_I$.

In this section, we adopt the $l$-mixed-coordinates $[\zeta]_l = (\eta^{l-};\theta_{l+})$, where $l=2$ is used in the following analysis. Let $\Delta \zeta_q = (\Delta \eta^{2-};\Delta \theta_{2+})$ be the incremental of mixed-coordinates after the perturbation. The squared Fisher information distance $D^2(q,q')=(\Delta \zeta_q)^T G_\zeta \Delta \zeta_q$ could be decomposed into the direction of each coordinate in $[\zeta]_l$. We will clarify that, under typical cases, the scale of the Fisher information distance in each coordinate of $\theta_{l+}$ (will be reduced by CIF) is asymptotically negligible, compared to that in each coordinate of $\eta^{l-}$ (will be preserved by CIF).

The scale of squared Fisher information distance in the direction of $\eta_{I}$ is proportional to $\Delta \eta_I \cdot (G_\zeta)_{I,I} \cdot \Delta \eta_I$, where $(G_\zeta)_{I,I}$ is the Fisher information of $\eta_{I}$ in terms of the mixed-coordinates $[\zeta]_2$.
From Equation~\ref{eq:etacoordinate}, for any $I$ of order one (or two), $\eta_{I} $ is the sum of $2^{n-1}$ (or $2^{n-2}$) $p$-coordinates, and the scale is $\Theta(1)$. Hence, the incremental $\Delta \eta^{2-}$ is proportional to $\Theta(1)$, denoted as $a\cdot \Theta(1)$.
It is difficult to give an explicit expression of $(G_\zeta)_{I,I}$ analytically. However, the Fisher information $(G_\zeta)_{I,I}$ of $\eta_{I}$ is bounded by the $(I,I)$-th element of the inverse covariance matrix \cite{bobrovsky87}, which is exactly $1/g^{I,I}(\theta)=\frac{1}{\eta_I - \eta_I^2}$ (see Proposition~\ref{prop:fishermatrix_mix}). Hence, the scale of $(G_\zeta)_{I,I}$ is also $\Theta(1)$. It turns out that the scale of squared Fisher information distance in the direction of $\eta_{I}$ is $a^2 \cdot \Theta(1)$.

Similarly, for the part $\theta_{2+}$, the scale of squared Fisher information distance in the direction of $\theta^{J}$ is proportional to $\Delta \theta^J \cdot (G_\zeta)_{J,J} \cdot \Delta \theta^J$, where $(G_\zeta)_{J,J}$ is the Fisher information of $\theta^{J}$ in terms of the mixed-coordinates $[\zeta]_2$.
The scale of $\theta^{J}$ is maximally $f(k)|log(\sqrt{\epsilon})|$ based on Equation~\ref{eq:thetacoordinate}, where $k$ is the order of $\theta^{J}$ and $f(k)$ is the number of $p$-coordinates of scale $\Theta(1/2^{(n/2)})$ that are involved in the calculation of $\theta^{J}$. Since we assume that $f(k)\le 2^{(n/2)}$, the maximum scale of $\theta^{J}$ is $2^{(n/2)}|log(\sqrt{\epsilon})|$. Thus, the incremental $\Delta \theta^J$ is of a scale bounded by $a \cdot 2^{(n/2)}|log(\sqrt{\epsilon})|$.
Similar to our previous deviation, the Fisher information $(G_\zeta)_{J,J}$ of $\theta^{J}$ is bounded by the $(J,J)$-th element of the inverse covariance matrix, which is exactly $1/g_{J,J}(\eta)$ (see Proposition~\ref{prop:fishermatrix_mix}). Hence, the scale of $(G_\zeta)_{J,J}$ is $(2^k-f(k))^{-1}\epsilon$. In summary, the scale of squared Fisher information distance in the direction of $\theta^{J}$ is bounded by the scale of $a^2 \cdot \Theta(2^n \epsilon \frac{|log(\sqrt{\epsilon})|^2}{2^k-f(k)})$. Since $\epsilon$ is a sufficiently small value and $a$ is constant, the scale of squared Fisher information distance in the direction of $\theta^{J}$ is asymptotically zero.

According to our above analysis, the confidences of coordinate parameters (measured by the decomposed Fisher information distance) in $[\zeta]_l$ entail a natural hierarchy: the first part of high confident parameters $[\eta^{l-}]$ are significantly larger than the second part of low confident parameters $[\theta_{l+}]$. Additionally, those low confident parameters $[\theta_{l+}]$ have the neutral value of zero.
Moreover, the parameters in $[\eta^{l-}]$ are orthogonal to the ones in $[\theta_{l+}]$, indicating that we could estimate these two parts independently \cite{hou2013}. Hence, we can implement the CIF for parametric reduction in $[\zeta]_l$ by replacing low confident parameters with neutral value zero and reconstructing the resulting distribution. It turns out that the submanifold of $S$ tailored by CIF becomes $[\zeta]_{l_t}=(\eta_{i}^1,..., \eta_{ij...k}^l,0,\dots,0)$. We call $[\zeta]_{l_t}$ the $l$-tailored-mixed-coordinates.

To verify our theoretical analysis, we conduct a simulation on the ratio of FID that is preserved by the $l$-tailored-mixed-coordinates ($l=2$) $[\zeta]_{l_t}$ w.r.t. the original mixed-coordinates $[\zeta]$. First we randomly select real distribution $p_t$ with $n$ variables, where the distribution satisfies the basic assumption that we make in the beginning of this section (the $2^{n/2}$ significant $p$-coordinates are generated based on Jeffery prior, left $p$-coordinates are set to a small constant). Then we generate the sample distribution $p_s$ based on random samples drawn from the real distribution. Last, we calculate the FID between $p_t$ and $p_s$ in terms of the $[\zeta]$ and $[\zeta]_{l_t}$ respectively. The result is shown in Table \ref{tab:FIDpreservesimulation}. We can see that the $[\zeta]_{l_t}$ can indeed preserve most of the FID, which is consistent with our theoretical analysis.
\begin{table}[!t]
\caption{Simulation on the FID preserved by $[\zeta]_{l_t}$ ($l=2$)}
\label{tab:FIDpreservesimulation}
\centering
\begin{tabular}{|c||c||c|c|}
\hline
\multirow{2}{*}{$n$} & ratio of preserved & \multicolumn{2}{c|}{ratio of preserved FID}\\
\cline{3-4}
& parameters & mean & standard deviation \\
\hline
3 & 0.857 & 0.9972 & 0.0055 \\
\hline
4 & 0.667 & 0.9963 & 0.0043 \\
\hline
5 & 0.484 & 0.9923 & 0.0054 \\
\hline
6 & 0.333 & 0.9824 & 0.0112 \\
\hline
7 & 0.220 & 0.9715 & 0.0111 \\
\hline
\end{tabular}
\end{table}

\begin{figure}
 \centering
 \includegraphics[width=0.4\textwidth]{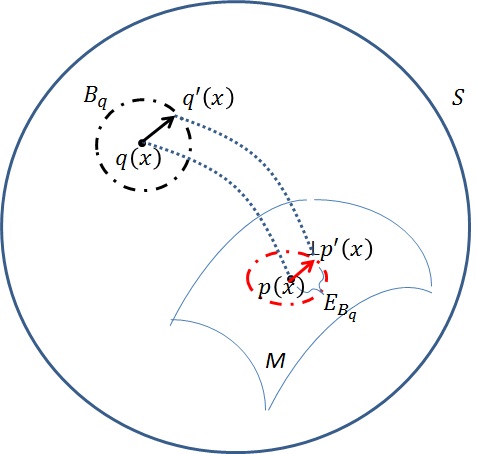}
 \caption{By projecting a point $q(x)$ on $S$ to a submanifold $M$, the $l$-tailored mixed-coordinates $[\zeta]_{l_t}$ gives a desirable $M$ that maximally preserves the expected Fisher information distance when projecting a $\varepsilon$-neighborhood centered at $q(x)$ onto $M$.}
 \label{fig:expectedFD} 
\end{figure}

\subsection{The CIF Leads to an Optimal Submanifold $M$}
Let $B_q$ be a $\varepsilon$-sphere surface centered at $q(x)$ on manifold $S$, \textit{i.e}., $B_q= \{q' \in S |\| KL(q,q') = \varepsilon\}$, where $KL(\cdot,\cdot)$ denotes the KL divergence and $\varepsilon$ is small. Additionally, $q'(x)$ is a neighbor of $q(x)$ uniformly sampled on $B_q$, as illustrated in Figure~\ref{fig:expectedFD}. Recall that, for a small $\varepsilon$, the KL divergence can be approximated by half of the squared Fisher information distance. Thus, in the parameterization of $[\zeta]_l$, $B_q$ is indeed the surface of a hyper-ellipsoid (centered at $q(x)$) determined by $G_\zeta$. The following proposition shows that the general CIF would lead to an optimal submanifold $M$ that maximally preserves the expected information distance, where the expectation is taken upon the uniform neighborhood, $B_q$.

\begin{proposition}\label{prop:GeometricView}
Consider the manifold $S$ in $l$-mixed-coordinates $[\zeta]_l$. Let $k$ be the number of free parameters in the $l$-tailored-mixed-coordinates $[\zeta]_{l_t}$. Then, among all $k$-dimensional submanifolds of $S$, the submanifold determined by $[\zeta]_{l_t}$ can maximally preserve the expected information distance
induced by the Fisher--Rao metric.
\end{proposition}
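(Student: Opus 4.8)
The plan is to reduce the statement to a finite-dimensional linear-algebra optimization in the tangent space at $q(x)$, and then to single out the $\eta^{l-}$ coordinate block as the distinguished maximizer.

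First I would localize at $q(x)$. For small $\varepsilon$ the approximation $KL(q,q')\approx\tfrac12 D^2(q,q')$ identifies $B_q$ with the tangent-space ellipsoid $\mathcal{E}=\{v: v^{T}G_\zeta v=2\varepsilon\}$ carrying the uniform surface measure, where $v=\Delta\zeta$ is the $[\zeta]_l$-displacement and $G_\zeta$ is evaluated at $q$. Since the Fisher--Rao projection $\Gamma_M$ is, to leading order in $\varepsilon$, the $G_\zeta$-orthogonal projection onto the tangent subspace $V=T_{\Gamma_M(q)}M$, the preserved distance satisfies $D(\Gamma_M(q'),\Gamma_M(q))\approx\|P_V v\|_{G_\zeta}$ with $P_V$ the $G_\zeta$-orthogonal projector onto $V$. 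The submanifold defined by $[\zeta]_{l_t}$ corresponds to the coordinate block $V_\star=\mathrm{span}\{\partial_{\eta_I}\}$ over the $\eta^{l-}$ indices, and, by the block-diagonality of Proposition~\ref{prop:fishermatrix_mix}, its $G_\zeta$-orthogonal complement is exactly the $\theta_{l+}$ block. Hence the claim becomes: among all $k$-dimensional subspaces $V$, the block $V_\star$ maximizes $E_{v\sim\mathrm{Unif}(\mathcal{E})}\|P_V v\|_{G_\zeta}$. I would carry out the argument with the squared distance $\|P_V v\|_{G_\zeta}^{2}$, which is cleaner, and remark that the maximizing subspace is unchanged for $D$ itself.

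Next I would whiten by substituting $u=G_\zeta^{1/2}v$, which sends $\mathcal{E}$ to the sphere $\|u\|^{2}=2\varepsilon$ and turns $P_V$ into an ordinary orthogonal projector $P_W$ onto $W=G_\zeta^{1/2}V$. Writing $M=E_\mu[uu^{T}]$ for the second-moment matrix of the pushforward $\mu$ of the surface measure, one obtains $E\|P_V v\|_{G_\zeta}^{2}=\mathrm{tr}(P_W M)$, and as $V$ ranges over $k$-dimensional subspaces so does $W$. This is the standard Ky Fan problem: $\mathrm{tr}(P_W M)$ is maximized precisely when $W$ is spanned by the top-$k$ eigenvectors of $M$. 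By the reflection symmetry $v_i\mapsto -v_i$ of $\mathcal{E}$, $M$ is diagonal in the eigenbasis of $G_\zeta$, with eigenvalue $p_i=\lambda_i\,E_\mu[v_i^{2}]$ in the direction of the $i$-th eigenvector (metric eigenvalue $\lambda_i$). It then remains to show that the ordering of the $p_i$ follows the ordering of the $\lambda_i$, so that the top-$k$ eigenspace of $M$ equals that of $G_\zeta$. I would then invoke Proposition~\ref{prop:fishermatrix_mix} together with the scale analysis of Section~\ref{sec:ciftypical}: the $\eta^{l-}$ block $A$ carries the dominant $\Theta(1)$ eigenvalues while the discarded $\theta_{l+}$ block $B$ is of scale $\Theta(\epsilon)$; since the two blocks are $G_\zeta$-orthogonal, the top-$k$ eigenspace of $G_\zeta$ is spanned by the $\eta^{l-}$ coordinates, i.e.\ equals $V_\star$, and optimality of $V_\star$ follows.

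The hard part will be the monotonicity of $p_i=\lambda_i\,E_\mu[v_i^{2}]$ in $\lambda_i$ for the uniform \emph{surface} measure in general dimension: the surface-area Jacobian couples all axes, so $M$ has no closed form. This is exactly where the surface measure is essential, for a Gaussian or solid-ellipsoid measure would give $E[vv^{T}]\propto G_\zeta^{-1}$, making $\mathrm{tr}(P_W M)$ constant and the problem degenerate. I expect to settle it by a pairwise mass-transport comparison between a large-$\lambda$ and a small-$\lambda$ axis, verifying that the $G_\zeta$-weighted directional variance moves toward the short axes; and I would note that the pronounced spectral gap between the $\Theta(1)$ block $A$ and the $\Theta(\epsilon)$ block $B$ from Section~\ref{sec:ciftypical} separates the two eigenspaces robustly, so sharp within-block monotonicity is not even needed to conclude that $V_\star$ is optimal. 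Secondary care is required in justifying that $\Gamma_M$ linearizes to the $G_\zeta$-orthogonal projector uniformly over the competing submanifolds $M$, and that passing from $\|P_V v\|_{G_\zeta}^{2}$ back to $D$ preserves the maximizer.
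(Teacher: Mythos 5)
Your plan reproduces the skeleton of the paper's own proof: linearize $B_q$ to the ellipsoid $v^{T}G_\zeta v=\mathrm{const}$ with uniform surface measure, decompose the expected squared distance along the eigendirections of $G_\zeta$, argue that directions with larger eigenvalues contribute more of the surface integral, and finally use the block-diagonality of Proposition~\ref{prop:fishermatrix_mix} to identify the winning directions with the $\eta^{l-}$ block. Your whitening/Ky--Fan packaging ($E\|P_V v\|^{2}_{G_\zeta}=\mathrm{tr}(P_W M)$, maximized on the top-$k$ eigenspace of the second-moment matrix $M$) is cleaner than the paper's coordinate-wise integrals and handles all $k$-dimensional subspaces at once, and your remark that a Gaussian or solid-ellipsoid average would give $M\propto I$ and make the problem degenerate is a sharp observation the paper never makes. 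But the proposal stops exactly where the mathematical content begins. The monotonicity of $p_i=\lambda_i E_\mu[v_i^{2}]$ in $\lambda_i$ is not a technicality to be ``settled by a pairwise mass-transport comparison''; it \emph{is} the theorem, and the paper's proof devotes essentially all of its length to it: inequality~\ref{eq:monotonicinequality} is reduced to the two-dimensional case, where the integral inequality~\ref{eq:definiteintegral} is established by a Riemann-sum rearrangement (correlation) argument in spherical coordinates, and then extended to $n$ dimensions. Without an executed version of this step you have established nothing about the eigenstructure of $M$, so the Ky--Fan machinery has nothing to bite on.

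The second gap is in your final identification step. You place the top-$k$ eigenvalues of $G_\zeta$ in the $\eta^{l-}$ block by invoking the $\Theta(1)$ versus $\Theta(\epsilon)$ scale analysis of Section~\ref{sec:ciftypical}, but that analysis holds only under the typical-distribution assumption made there (all but roughly $2^{n/2}$ of the $p$-coordinates being of scale $\epsilon$), whereas Proposition~\ref{prop:GeometricView} is stated for an arbitrary $q(x)\in S$: in general there is no $\epsilon$ and no spectral gap to invoke, and your fallback claim that ``sharp within-block monotonicity is not even needed'' inherits the same restriction. The paper instead proves a distribution-free statement, Lemma~\ref{prop:fishermatrix_mixDiagonal} (via Gramian vectors and a Schur complement, Lemma~\ref{lemma:diagOfmix}): the diagonal entries of the block $A=((G_\eta^{-1})_{I_\eta})^{-1}$ are bounded below by one while those of $B=((G_\theta^{-1})_{J_\theta})^{-1}$ are bounded above by one, and this is what it uses to separate the blocks. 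To complete your argument in the stated generality you would need to replace the scale analysis with a lemma of this kind (noting that even the paper's own passage from diagonal bounds to eigenvalue separation via the trace is terse and deserves care).
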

\begin{proof}
in Appendix \ref{appendix:geometriccif}.
\end{proof}

\section{CIF-based Interpretation of Boltzmann Machine}\label{sec:implementationCIF}
In previous section, a general CIF is uncovered in the $[\zeta]_l$ coordinates for multivariate binary distributions. Now we consider the implementations of CIF when $l$ equals to 2 using the Boltzmann machines (BM).

\subsection{Introduction to the Boltzmann Machines}
In general, a BM \cite{ackley85BM} is defined as a stochastic neural network consisting of visible units $x\in\{0,1\}^{n_x}$ and hidden units $h\in\{0,1\}^{n_h}$, where each unit fires stochastically depending on the weighted sum of its inputs. The energy function is defined as follows:
\begin{equation}\label{eq:energyBM}
   E_{BM}(x,h;\xi)=-\frac{1}{2}x^TUx-\frac{1}{2}h^TVh -x^TWh -b^Tx-d^Th
\end{equation}
where $\xi=\{U,V,W,b,d\}$ are the parameters: visible-visible interactions ($U$), hidden-hidden interactions ($V$), visible-hidden interactions ($W$), visible self-connections ($b$) and hidden self-connections ($d$). The diagonals of $U$ and $V$ are set to zero.
We can express the Boltzmann distribution over the joint space of $x$ and $h$ as below:
\begin{equation}\label{eq:probBM}
    p(x,h;\xi)=\frac{1}{Z}exp\{-E_{BM}(x,h;\xi)\}
\end{equation}
where $Z$ is a normalization factor.
\subsubsection{The Coordinates for Boltzmann Machines}
~\\
\vspace{-4mm}
\par
Let $B$ be the set of Boltzmann distributions realized by BM. Actually, $B$ is a submanifold of the general manifold $S_{xh}$ over $\{x,h\}$. From Equation (\ref{eq:probBM}) and (\ref{eq:energyBM}), we can see that $\xi=\{U,V,W,b,d\}$ plays the role of $B$'s coordinates in $\theta$-coordinates (Equation \ref{eq:thetacoordinate}) as follows:
\begin{eqnarray}\label{eq:bmtotheta}
  \theta_1 &:& \theta^{x_i}_1 = b_{x_i}, \theta^{h_j}_1 = d_{h_j} (\forall x_i\in x, h_j \in h) \nonumber \\
  \theta_2 &:& \theta^{x_ix_j}_2 = U_{x_i,x_j}, \theta^{x_ih_j}_2 = W_{x_i,h_j}, \nonumber \\
    && \theta^{h_ih_j}_2 = V_{h_i,h_j}, (\forall x_i, x_j\in x; h_i,h_j \in h) \nonumber \\
  \theta_{2+} &:& \theta^{x_i\dots x_j h_u \dots h_v}_m = 0, m>2, \nonumber \\
    &&(\forall x_i,\dots, x_j \in x; h_u,\dots,h_v \in h)
\end{eqnarray}
So the $\theta$-coordinates for BM is given by:
\begin{equation}\label{eq:thetaforBM}
  [\theta]_{BM}=(\underbrace{\theta^{x_i}_1, \theta^{h_j}_1}_{1-order}, \underbrace{\theta^{x_ix_j}_2, \theta^{x_ih_j}_2, \theta^{h_ih_j}_2}_{2-order}, \underbrace{0, \dots, 0 }_{orders>2}).
\end{equation}

The VBM and restricted BM are special cases of the general BM. Since VBM has $n_h=0$ and all the visible units are connected to each other, the parameters of VBM are $\xi_{vbm}=\{U,b\}$ and $\{V,W,d\}$ are all set to zero. For RBM, it has connections only between hidden and visible units. Thus, the parameters of RBM are $\xi_{rbm}=\{W,b,d\}$ and $\{U,V\}$ are set to zero.

\subsubsection{The Gradient-based Learning of BM} \label{sec:learningMLforBM}
~\\
\vspace{-4mm}
\par
~~Given the sample $\underline{x}$ that generated from the underlying distribution, the \emph{maximum-likelihood} (ML) is a commonly used gradient ascent method for training BM in order to maximize the log-likelihood $\log p(\underline{x};\xi)$ of the parameters $\xi$ \cite{hinton05CD}. Based on Equation (\ref{eq:probBM}), the log-likelihood is given as follows:
$$\log p(\underline{x};\xi)=log \sum_h e^{-E(\underline{x},h;\xi)} - log \sum_{x',h'} e^{-E(x',h';\xi)}$$
Differentiating the log-likelihood, the gradient with respect to $\xi$ is as follows:
\begin{eqnarray}\label{eq:gradientloglikelihood}
\!\!\!\!  \frac{\partial \log p(\underline{x};\xi)}{\partial \xi}&=&\sum_h p(h|\underline{x};\xi)\frac{\partial [-E(\underline{x},h;\xi)]}{\partial \xi} \nonumber \\
\!\!\!\! &&- \sum_{x',h'} p(h'|x';\xi)\frac{\partial [-E(x',h';\xi)]}{\partial \xi}
\end{eqnarray}
where $\frac{\partial E(x,h;\xi)}{\partial \xi}$ can be easily calculated from Equation (\ref{eq:energyBM}). Then we can obtain the stochastic gradient using Gibbs sampling \cite{gilk96mcmc} in two phases: sample $\underline{h}$ given $\underline{x}$ for the first term, called the positive phase, and sample $(\underline{x}',\underline{h}')$ from the stationary distribution $p(x',h';\xi)$ for the second term, called the negative phase. Now with the resulting stochastic gradient estimation, the learning rule is to adjust $\xi$ by:
\begin{eqnarray}\label{eq:learnrulestochasticgradient}
  \Delta \xi_{ml} &=& \varepsilon \frac{\partial \log p(\underline{x};\xi)}{\partial \xi} \nonumber \\
   &\propto& - \langle \frac{\partial E(\underline{x},\underline{h};\xi)}{\partial \xi} \rangle_0 + \langle \frac{\partial E(\underline{x'},\underline{h'};\xi)}{\partial \xi} \rangle_\infty
\end{eqnarray}
where $\varepsilon$ is the learning rate, $\langle\cdot \rangle_0$ denotes the average using the sample data and $\langle\cdot \rangle_\infty$ denotes the average with respect to the stationary distribution $p(x,h;\xi)$ after the corresponding Gibbs sampling phases.

To avoid the difficulty of computing the log-likelihood gradient, the Contrastive divergence (CD) \cite{hinton05CD} realizes the gradient descent of a different objective function, shown as follows:
\begin{eqnarray}\label{CD-learning}
 \Delta \xi_{cd} &=&-\varepsilon \frac{\partial (KL(p_0||p) - KL(p_m||p))}{\partial \xi} \nonumber \\
    &\propto&  - \langle \frac{\partial E(\underline{x},\underline{h};\xi)}{\partial \xi} \rangle_0 + \langle \frac{\partial E(\underline{x'},\underline{h'};\xi)}{\partial \xi} \rangle_m
\end{eqnarray}
where $p_0$ is the sample distribution, $p_m$ is the distribution by starting the Markov chain with the data and running $m$ steps and $KL(\cdot||\cdot)$ denotes the KL divergence. CD can be seen as an approximation to ML by replacing the last expectation $\langle\cdot \rangle_\infty$ with $\langle\cdot \rangle_m$.


\subsection{The Fully Visible Boltzmann Machine}\label{sec:geometryBM}

Consider the parametric reduction on the manifold $S$ over $\{x\}$ and end up with a $k$-dimensional submanifold $M$ of $S$, where $k \ll2^{n_x}\!\!-\!\!1$ is the number of free parameters in $M$.
$M$ is set to be the same dimensionality as VBM, i.e., $k=\frac{n_x(n_x+1)}{2}$, so that all candidate submanifolds are comparable to the submanifold $M_{vbm}$ endowed by VBM.
Next, the rationale underlying the design of $M_{vbm}$ can be illuminated using the general CIF.
\subsubsection{The Derivation of VBM via CIF}
~\\
\vspace{-4mm}
\par
In the following corollary, we will show that the statistical manifold $M_{vbm}$ is the optimal parameter subspace spanned by those directions with high confidences in terms of CIF.
\begin{corollary}\label{prop:SBMCIF}
Given the general manifold $S$ in $2$-mixed-coordinates $[\zeta]_2$, VBM (with coordinates $[\zeta]_{2_t}$) defines an $k$-dimensional submanifold of $S$ that can maximally preserve the expected Fisher information distance induced by Fisher-Rao metric.
\end{corollary}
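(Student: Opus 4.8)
The plan is to reduce this corollary to a direct application of Proposition~\ref{prop:GeometricView}, by showing that the submanifold $M_{vbm}$ realized by VBM is precisely the $2$-tailored-mixed-coordinate submanifold $[\zeta]_{2_t}$ of the manifold $S$ over $x\in\{0,1\}^{n_x}$. Once this identification is made, the optimality of $M_{vbm}$ among all $k$-dimensional submanifolds follows immediately from the general geometric result already established, so no new variational argument is needed.

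First I would read off the $\theta$-coordinate characterization of VBM. Since VBM has no hidden units, Equation~\ref{eq:thetaforBM} specializes to free natural parameters $\theta^{x_i}_1$ (order one) and $\theta^{x_ix_j}_2$ (order two), together with the constraint $\theta^I=0$ for every index $I$ with $|I|>2$. Thus, as a subset of $S$, we have $M_{vbm}=\{q\in S:\theta^I(q)=0 \text{ for all } |I|>2\}$. On the other hand, the $2$-tailored-mixed-coordinate submanifold is defined in the mixed coordinates $[\zeta]_2=(\eta^{2-},\theta_{2+})$ (Equation~\ref{eq:lmixedcoordinates}) by fixing $\theta_{2+}=0$ while letting $\eta^{2-}$ vary freely, which is exactly the same constraint set $\{q\in S:\theta^I(q)=0 \text{ for all } |I|>2\}$. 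Hence $M_{vbm}$ and $[\zeta]_{2_t}$ coincide as subsets of $S$; they merely use different charts to label points within it, VBM using the natural parameters $\{U,b\}$ (the order-one and order-two $\theta$-parameters) and $[\zeta]_{2_t}$ using $\eta^{2-}$. That the two charts describe the same submanifold is guaranteed by the Legendre bijection between $[\theta]$ and $[\eta]$ (Equation~\ref{eq:trans_eta_theta}), restricted to the slice $\theta_{2+}=0$.

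Next I would verify the dimension count, so that the comparison class in Proposition~\ref{prop:GeometricView} is the correct one. The number of free parameters in $[\zeta]_{2_t}$ is the number of $\eta$-coordinates of order at most two, namely $\binom{n_x}{1}+\binom{n_x}{2}=\frac{n_x(n_x+1)}{2}=k$, which matches the parameter count of VBM with $\xi_{vbm}=\{U,b\}$. With $k$ thus fixed, Proposition~\ref{prop:GeometricView} asserts that among all $k$-dimensional submanifolds of $S$, the one determined by $[\zeta]_{2_t}$ maximally preserves the expected Fisher information distance over a uniform $\varepsilon$-neighborhood. Since $M_{vbm}=[\zeta]_{2_t}$, the same optimality holds for VBM, which is the claim.

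The main obstacle is not a hard computation but the careful bookkeeping in the identification step: one must confirm that varying the low-order natural parameters $\{U,b\}$ and varying the expectation parameters $\eta^{2-}$ sweep out exactly the same set of distributions on the slice $\theta_{2+}=0$, with neither chart reaching distributions the other cannot. This is where the orthogonal block structure of $G_\zeta$ (Proposition~\ref{prop:fishermatrix_mix}) and the Legendre duality do the real work, since they ensure that $[\zeta]_2$ is a genuine coordinate system in which fixing the $\theta_{2+}$ block and freeing the $\eta^{2-}$ block yields a well-defined $k$-dimensional submanifold identical to the $\theta$-defined VBM slice.
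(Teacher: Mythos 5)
Your proof is correct and takes essentially the same route as the paper: identify the VBM submanifold $M_{vbm}$ with the $2$-tailored-mixed-coordinate submanifold $[\zeta]_{2_t}$, check the dimension count $k=\frac{n_x(n_x+1)}{2}$, and then invoke Proposition~\ref{prop:GeometricView}. The only difference is in how the identification is justified---the paper cites the projection theorem of Amari et al.\ for $M_{vbm}$, whereas you argue it directly from the $\theta$-coordinate characterization of VBM ($\theta^I=0$ for $|I|>2$) together with the Legendre bijection, which is a more self-contained justification of the same step.
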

\begin{proof}
in Appendix \ref{appendix:SBMCIF}.
\end{proof}
\subsubsection{The Interpretation of VBM Learning via CIF}
~\\
\vspace{-4mm}
\par
To learn such $[\zeta]_{2_t}$, we need to learn the parameters $\xi$ of VBM such that its stationary distribution preserves the same coordinates $[\eta^{2-}]$ as target distribution $q(x)$. Actually, this is exactly what traditional gradient-based learning algorithms intend to do. Next proposition shows that the ML learning of VBM is equivalent to learn the coordinates $[\zeta]_{2_t}$.

\begin{proposition}\label{prop:sbmmlcloseform}
    Given the target distribution $q(x)$ with 2-mixed coordinates: $$[\zeta]_2=(\eta^1_i, \eta^2_{ij},\theta_{2+})$$
    the coordinates of the stationary distribution of VBM trained by ML are uniquely given by: $$[\zeta]_{2_t}=(\eta_{i}^1, \eta_{ij}^2,\theta_{2+}=0)$$
\end{proposition}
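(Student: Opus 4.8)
The plan is to exploit the exponential-family structure of the VBM together with the classical moment-matching property of maximum-likelihood estimation, treating the two halves of the mixed coordinates $[\zeta]_2=(\eta^{2-},\theta_{2+})$ separately. First I would dispense with the second part of the claim, $\theta_{2+}=0$, by noting that it holds for \emph{any} stationary distribution of a VBM, prior to any training. Indeed, from Equation~\ref{eq:thetaforBM} the VBM realizes exactly those natural coordinates with $\theta^{x_i\dots x_j}_m=0$ for all orders $m>2$; setting $\{V,W,d\}$ to zero eliminates the hidden units, so the Boltzmann distribution of the VBM is a member of the exponential family over $\{x\}$ whose only nonvanishing natural parameters are $(b,U)=(\theta_1,\theta_2)$. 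Hence every VBM stationary distribution lies on the submanifold $\{\theta_{2+}=0\}$, which settles the $\theta_{2+}$ component immediately.

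Next I would establish the $\eta^{2-}$ component via the ML stationary condition. With energy $E(x;\xi)=-\frac{1}{2}x^TUx-b^Tx$, the derivatives of $-E$ with respect to the free parameters are $\partial(-E)/\partial b_i=x_i$ and $\partial(-E)/\partial U_{ij}=x_ix_j$. Substituting these into the log-likelihood gradient of Equation~\ref{eq:gradientloglikelihood} (where, since there are no hidden units, the positive phase collapses to the plain data average $\langle\cdot\rangle_0$) and setting the gradient to zero yields the moment-matching equalities $\langle x_i\rangle_0=\langle x_i\rangle_\infty$ and $\langle x_ix_j\rangle_0=\langle x_ix_j\rangle_\infty$ at the stationary point. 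By the definition of the $\eta$-coordinates in Equation~\ref{eq:etacoordinate} we have $\eta^1_i=E[x_i]$ and $\eta^2_{ij}=E[x_ix_j]$, so these conditions say precisely that the first- and second-order $\eta$-coordinates of the VBM's stationary distribution coincide with those of the target $q(x)$.

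Finally I would argue uniqueness. Because the VBM is an exponential family, its log-likelihood is strictly concave in the natural parameters $(b,U)$ (equivalently, $\psi(\theta)$ is strictly convex), so the stationary point above is the unique global maximum, and the map from $(b,U)$ to the expectation parameters $(\eta^1_i,\eta^2_{ij})$ is a bijection, namely the restriction of the Legendre duality of Equation~\ref{eq:trans_eta_theta} to the VBM submanifold. Combined with the already-fixed $\theta_{2+}=0$, the learned distribution is pinned down in the mixed coordinates $[\zeta]_2$ as $(\eta^1_i,\eta^2_{ij},\theta_{2+}=0)$, which is exactly $[\zeta]_{2_t}$.

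The main obstacle I anticipate is making the existence-and-uniqueness step airtight rather than the gradient computation, which is routine. Specifically, one must verify that the target's values $(\eta^1_i,\eta^2_{ij})$ actually lie in the range of expectation parameters attainable by a VBM, so that the moment-matching equalities are solvable; this follows from $q(x)$ belonging to the open simplex and the expectation-parameter space of the exponential family being the interior of the corresponding moment space. One must also confirm that $(\eta^{2-},\theta_{2+})$ genuinely forms a valid coordinate chart, so that specifying its two halves independently determines a single point of $S$ — which is guaranteed by the block-diagonal, orthogonal product structure of $G_\zeta$ established in Proposition~\ref{prop:fishermatrix_mix}. Once these two facts are in place, the three steps above combine to give the stated closed form.
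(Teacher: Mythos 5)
Your proposal is correct and follows essentially the same path as the paper's proof: first fixing $\theta_{2+}=0$ from the structure of the VBM's natural coordinates (Equation~\ref{eq:thetaforBM}), then deriving the moment-matching conditions $\langle x_i\rangle_0=\langle x_i\rangle_\infty$ and $\langle x_ix_j\rangle_0=\langle x_ix_j\rangle_\infty$ from the vanishing ML gradient, and finally invoking uniqueness. The only cosmetic difference is in the last step: the paper cites the $e$-flatness of $M_{vbm}$ (after \cite{amari92igbm}) to get uniqueness of the converged solution, whereas you argue it via strict convexity of $\psi(\theta)$ and Legendre duality together with a realizability check --- these are the same exponential-family fact stated in geometric versus statistical language, with your version being slightly more self-contained.
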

\begin{proof}
in Appendix \ref{appendix:sbmmlcloseform}.
\end{proof}

\subsection{The Boltzmann Machine with Hidden Units}\label{subsec:geometryRBM}
In previous section, the CIF is applied to models without hidden units and leads to VBM by preserving the $1$-order and $2$-order $\eta$-coordinates. In this section, we will investigate the cases where hidden units are introduced.

Let $S_{xh}$ be the manifold of distributions over the joint space of visible units $x$ and hidden units $h$. A general BM produces a stationary distribution $p(x,h;\xi)\in S_{xh}$ over $\{x,h\}$. Let $B$ denote the submanifold of $S_{xh}$ with probability distributions $p(x,h;\xi)$ realizable by BM.

Given any target distribution $q(x)$, only the marginal distribution of BM over the visible units are specified, leaving the distributions on hidden units vary freely. Let $H_q$ be the submanifold of $S_{xh}$ with probability distributions $q(x,h)$ that have the same marginal distribution as $q(x)$ and the conditional distribution $q(h|x)$ of hidden units is realised by the BM's activation functions with some parameter $\xi_{bm}$.

Then, the best BM is the one that minimizes the distance between $B$ and $H_q$. Due to the existence of hidden units, the solution may not be unique.
In this section, the training process of BM is analysed in terms of manifold projection (described in Section \ref{sec:intro}), following the framework of the learning rule proposed in \cite{amari92igbm}. And we will show that the invariance in the learning of BM is the CIF.

\subsubsection{The Iterative Projection Learning for BM}
~\\
\vspace{-4mm}
\par
The learning algorithm using iterative manifold projection is first proposed in \cite{amari92igbm} and theoretically compared to EM (Expectation and Maximization) algorithm in \cite{amari1995information}.
The learning of RBM can be implemented by the following iterative projection process:
Let $\xi_p^0$ be the initial parameters of BM and $p_0(x,h;\xi_p^0)$ be the corresponding stationary distribution.

For $i=0,1,2,\dots$,
\begin{enumerate}
  \item Put $q_{i+1}(x,h)=\Gamma_H(p_i(x,h;\xi_p^i))$
  \item Put $p_{i+1}(x,h;\xi_p^{i+1})=\Gamma_B(q_{i+1}(x,h))$
\end{enumerate}
where $\Gamma_H(p)$ denotes the projection of $p(x,h;\xi_p)$ to $H_q$, and $\Gamma_B(q)$ denotes the projection of $q(x,h)$ to $B$.
The iteration ends when we reach the fixed points of the projections $p^*$ and $q^*$, that is $\Gamma_H(p^*)=q^*$ and $\Gamma_B(q^*)=p^*$. The iterative projection process is illustrated in Figure \ref{fig:iterativelearning}. The convergence property of this iterative algorithm is guaranteed using the following proposition:
\begin{proposition}\label{prop:monotonicdivergence}
    The monotonic relation holds in the iterative learning algorithm:
    \begin{equation}\label{eq:propmonotonic}
      D[q_{i+1},p_i] \geq D[q_{i+1},p_{i+1}] \geq D[q_{i+2},p_{i+1}]
    \end{equation}
    where the equality holds only for the fixed points of the projections.
\end{proposition}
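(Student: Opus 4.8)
The plan is to read Proposition~\ref{prop:monotonicdivergence} as the standard \emph{alternating minimization} (or \emph{em}) statement of \cite{amari92igbm}: a single bivariate divergence $D[q,p]$ (here the KL divergence, the canonical divergence of IG used for these projections) is minimized alternately in its two arguments, once over the model manifold $B$ and once over the data manifold $H_q$. The key observation is that the two projections minimize the \emph{same} functional: by definition $\Gamma_B(q)=\arg\min_{p\in B} D[q,p]$ holds the first argument fixed and minimizes over $p\in B$, while $\Gamma_H(p)=\arg\min_{q\in H_q} D[q,p]$ holds the second argument fixed and minimizes over $q\in H_q$. With this reading, the only facts the monotonicity requires are (i) each projection is the minimizer of $D$ over the respective manifold, and (ii) the iterate produced in the previous half-step already lies in the manifold targeted by the current half-step. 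Fact (ii) holds because $p_i\in B$ (it is the BM stationary distribution $\Gamma_B(q_i)$, or the initial $p_0$) and $q_{i+1}\in H_q$ (it is $\Gamma_H(p_i)$) by construction.

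Granting this, the two inequalities follow immediately. For the first, fix $q_{i+1}$; since $p_{i+1}=\Gamma_B(q_{i+1})$ minimizes $D[q_{i+1},\cdot]$ over $B$ and $p_i\in B$,
$$D[q_{i+1},p_{i+1}]=\min_{p\in B} D[q_{i+1},p]\le D[q_{i+1},p_i].$$
For the second, fix $p_{i+1}$; since $q_{i+2}=\Gamma_H(p_{i+1})$ minimizes $D[\cdot,p_{i+1}]$ over $H_q$ and $q_{i+1}\in H_q$,
$$D[q_{i+2},p_{i+1}]=\min_{q\in H_q} D[q,p_{i+1}]\le D[q_{i+1},p_{i+1}].$$
Concatenating these yields Equation~\ref{eq:propmonotonic}. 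For the equality clause I would use uniqueness of the projections: equality in the first line forces $p_i$ to also attain $\min_{p\in B} D[q_{i+1},p]$, hence $p_i=p_{i+1}=\Gamma_B(q_{i+1})$, and equality in the second forces $q_{i+1}=q_{i+2}=\Gamma_H(p_{i+1})$; together these are exactly the fixed-point conditions $\Gamma_B(q^*)=p^*$, $\Gamma_H(p^*)=q^*$, and conversely at a fixed point both inequalities are equalities.

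The substantive work is therefore not the inequality chain but justifying that the projections are \emph{well defined and unique} minimizers, which is where the information geometry of Section~\ref{sec:def} enters. I would verify that $B$ is e-flat, being cut out by the linear constraints on $[\theta]$ in Equation~\ref{eq:thetaforBM} (the higher-order natural parameters are fixed to zero), and that $H_q$ is m-flat, since fixing the visible marginal $q(x)$ imposes linear constraints in the expectation/mixture coordinates. The generalized Pythagorean theorem of IG then guarantees that the m-projection onto the e-flat $B$ and the e-projection onto the m-flat $H_q$ each exist and are unique and coincide with the $D$-minimizers used above; uniqueness is precisely what upgrades ``$\le$'' to strict inequality away from the fixed points (using positivity of the divergence between distinct distributions, valid since IG keeps all probabilities strictly positive). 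I expect the main obstacle to be the careful bookkeeping for $H_q$: its definition couples a marginal constraint with the requirement that the conditional $q(h\mid x)$ be realizable by the BM activations, so I would argue that for the monotonicity argument only the minimizing property of $\Gamma_H$ over $H_q$ together with $q_{i+1}\in H_q$ are actually needed, both of which hold by the construction of the iterates.
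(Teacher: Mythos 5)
Your proposal is correct and its core is exactly the paper's own proof: each inequality follows from the fact that the projection produced in the current half-step minimizes $D$ over its manifold ($B$ or $H_q$) while the previous iterate already lies in that manifold. The additional material on e-flatness of $B$, m-flatness of $H_q$, uniqueness of projections, and the fixed-point equality clause goes beyond what the paper writes, but it only supplements rather than changes the argument.
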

\begin{proof}
in Appendix \ref{appendix:monotonicdivergence}.
\end{proof}

\begin{figure}
  \centering
  \includegraphics[width=0.4\textwidth]{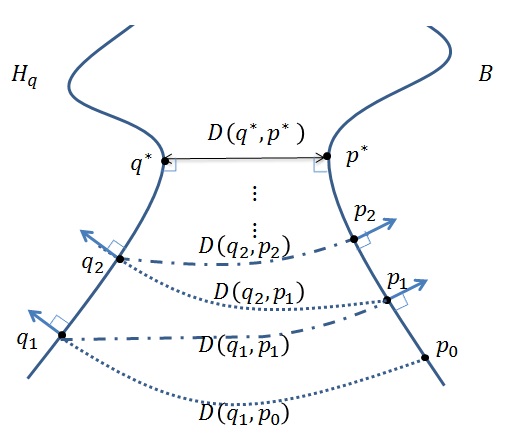}
  \caption{The iterative learning for BM: in searching for the minimum distance between $H_q$ and $B$, we first choose an initial BM $p_0$ and then perform projections $\Gamma_H(p)$ and $\Gamma_B(q)$ iteratively, until the fixed points of the projections $p^*$ and $q^*$ are reached. With different initializations, the iterative projection algorithm may end up with different local minima on $H_q$ and $B$, respectively.}
  \label{fig:iterativelearning} 
\end{figure}

Next two propositions show how the projection $\Gamma_H(p)$ and $\Gamma_B(q)$ are obtained.
\begin{proposition}\label{prop:hqtorbm}
    Given a distribution $p(x,h;\xi_p)\in B$, the projection $\Gamma_H(p)\in H_q$ that gives the minimum divergence $D(H_q, p(x,h;\xi_p))$ from $H_q$ to $p(x,h;\xi_p)$ is the $q(x,h;\xi_{bm}) \in H_q$ that satisfies $\xi_{bm}=\xi_p$.
\end{proposition}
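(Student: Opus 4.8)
The plan is to use the product factorization of the two distributions together with the chain rule for the KL divergence. By the convention fixed in the monotonicity relation of Proposition~\ref{prop:monotonicdivergence} (and the general definition of $\Gamma_M$), $\Gamma_H(p)$ is the element of $H_q$ minimizing $D[\,\cdot\,,p]$, so with $D[q,p]=KL(q\,\|\,p)$ the quantity to minimize is $D[q(x,h),\,p(x,h;\xi_p)]$ over $q(x,h)\in H_q$. First I would note that, by the definition of $H_q$, every such $q$ factors as $q(x,h)=q(x)\,p(h|x;\xi_{bm})$, with the visible marginal $q(x)$ frozen to the target and the only freedom residing in the BM conditional $p(h|x;\xi_{bm})$; correspondingly the point being projected factors as $p(x,h;\xi_p)=p(x;\xi_p)\,p(h|x;\xi_p)$.

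I would then apply the chain rule for the KL divergence to split the objective into a visible part and a conditional part:
\begin{equation}\label{eq:hqchain}
D[q,\,p(\xi_p)] = KL\big(q(x)\,\|\,p(x;\xi_p)\big) + \sum_{x} q(x)\, KL\big(p(h|x;\xi_{bm})\,\|\,p(h|x;\xi_p)\big).
\end{equation}
The key point is that the first summand involves only the fixed marginals and is therefore constant in $\xi_{bm}$, so minimizing $D[q,p(\xi_p)]$ is equivalent to minimizing the second summand alone. That second summand is a $q(x)$-weighted average of conditional KL divergences, hence nonnegative, and it attains its minimal value $0$ precisely when $p(h|x;\xi_{bm})=p(h|x;\xi_p)$ for every $x$ with $q(x)>0$. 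Taking $\xi_{bm}=\xi_p$ obviously realizes this equality, so the projected distribution is $\Gamma_H(p)=q(x)\,p(h|x;\xi_p)=q(x,h;\xi_p)$, as claimed.

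The step I expect to require the most care is the identification of the minimizer. Because the $x$-only terms of $E_{BM}$ cancel in the conditional, $p(h|x;\xi)$ depends on $\xi$ only through $\{V,W,d\}$ and not through $\{U,b\}$; hence $\xi_{bm}=\xi_p$ should be read as the canonical (sufficient) parameter setting that reproduces the optimal conditional $p(h|x;\xi_p)$, while the optimal element of $H_q$ itself, namely $q(x)\,p(h|x;\xi_p)$, is the genuinely unique object. I would also verify at this point that the decomposition~(\ref{eq:hqchain}) and all KL terms are well defined, which is guaranteed by the IG positivity requirement that keeps $q(x)$, $p(x;\xi_p)$ and the conditionals strictly positive on the whole configuration space. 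The chain-rule decomposition itself is routine; the substantive content is recognizing that freezing the visible marginal collapses the projection onto a pure conditional-matching problem whose optimum is attained inside the BM conditional family.
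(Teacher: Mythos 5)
Your proof is correct and takes essentially the same route as the paper: the chain-rule decomposition of the KL divergence into $D[q(x),p(x)]$ plus an expected conditional divergence $E_{q(x)}[D[q(h|x),p(h|x)]]$, with the first term constant over $H_q$ and the second nonnegative and vanishing at $\xi_{bm}=\xi_p$. Your additional observation that $p(h|x;\xi)$ depends only on $\{V,W,d\}$ (so uniqueness holds at the level of the projected distribution $q(x)\,p(h|x;\xi_p)$ rather than of the parameter vector) is a mild sharpening of the paper's ``if and only if $\xi_q=\xi_p$'' phrasing, but does not change the argument.
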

\begin{proof}
in Appendix \ref{appendix:hptorbm}.
\end{proof}

\begin{proposition}\label{prop:projectionRBMcloseform}
    Given $q(x,h;\xi_q) \in H_q$ with mixed coordinates: $[\zeta^{xh}]_{q}=(\eta_{x_i}^1, \eta_{h_j}^1, \eta_{x_ix_j}^2, \eta_{x_ih_j}^2, \eta_{h_ih_j}^2,\theta_{2+})$,
    the coordinates of the learnt projection $\Gamma_B(q) \in B$ are uniquely given by the tailored mixed coordinates:
\begin{equation}\label{eq:mixedcoordinatenewProjectionRBM}
  [\zeta^{xh}]_{\Gamma_B(q)}=(\eta_{x_i}^1, \eta_{h_j}^1, \eta_{x_ix_j}^2, \eta_{x_ih_j}^2, \eta_{h_ih_j}^2, \theta_{2+}=0)
\end{equation}
\end{proposition}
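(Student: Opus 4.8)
The plan is to recognize $B$ as an $e$-flat submanifold of $S_{xh}$ and to identify the projection $\Gamma_B(q)$ with the $m$-projection onto it, then simply read off the mixed coordinates from the resulting stationarity conditions. Equation~\ref{eq:thetaforBM} shows that every distribution realizable by a BM has $\theta$-coordinates of the form $(\theta_1,\theta_2,0,\dots,0)$, i.e. all natural parameters of order greater than two vanish. Hence $B$ is exactly the set of joint distributions on $\{x,h\}$ cut out by the linear constraints $\theta_{2+}=0$, which makes $B$ an $e$-flat submanifold whose free coordinates are precisely the first- and second-order natural parameters $\{\theta^{x_i}_1,\theta^{h_j}_1,\theta^{x_ix_j}_2,\theta^{x_ih_j}_2,\theta^{h_ih_j}_2\}$.

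First I would make explicit that, consistent with the orientation of the divergence used in the iterative scheme (Proposition~\ref{prop:monotonicdivergence}), $\Gamma_B(q)$ is the $m$-projection $\arg\min_{p\in B} KL(q\,\|\,p)$. Writing $p\in B$ in $\theta$-coordinates via Equation~\ref{eq:thetacoordinate} with $\theta_{2+}=0$, the objective becomes $KL(q\|p)=c_q-\sum_{|I|\le 2}\theta^I \eta_I(q)+\psi(\theta)$, where $\eta_I(q)=E_q[X_I]$ and $c_q=\sum_x q(x)\log q(x)$ is independent of $p$. Differentiating with respect to each free coordinate $\theta^I$ and using $\partial\psi/\partial\theta^I=\eta_I(p)$ (Equation~\ref{eq:trans_eta_theta}) yields the stationarity conditions $\eta_I(\Gamma_B(q))=\eta_I(q)$ for every index $I$ of order one or two. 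In other words, the $m$-projection preserves exactly the first- and second-order $\eta$-coordinates of $q$, over both the visible and hidden components.

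Combining the two observations gives the claim: the projection lies in $B$, so its high-order natural coordinates are $\theta_{2+}=0$, while its low-order dual coordinates match those of $q$, i.e. $(\eta^1_{x_i},\eta^1_{h_j},\eta^2_{x_ix_j},\eta^2_{x_ih_j},\eta^2_{h_ih_j})$ are unchanged; read off in the mixed coordinate system $[\zeta^{xh}]_2$ this is precisely Equation~\ref{eq:mixedcoordinatenewProjectionRBM}. This is the hidden-unit analogue of Proposition~\ref{prop:sbmmlcloseform}, and the algebra mirrors that proof with the coordinate set enlarged to include the $h$-indices. For uniqueness I would invoke the strict convexity of $\psi(\theta)$ for the minimal exponential family $B$ (the monomials $X_I$ with $|I|\le 2$ being affinely independent): this makes $KL(q\|p)$ strictly convex in the free parameters, so the stationary point is the unique global minimizer; equivalently, fixing the preserved $\eta^{2-}$ together with $\theta_{2+}=0$ determines a single point through the bijective Legendre correspondence between $[\eta]$ and $[\theta]$, and hence through $[\zeta]_2$.

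The step I expect to be the main obstacle is pinning down that $\Gamma_B(q)$ really is the $m$-projection (equivalently, fixing the orientation of $KL$) and that the minimizer is attained in the interior of the open simplex, so that it is characterized by the vanishing-gradient conditions above rather than on some boundary; once the divergence direction is fixed, the remaining computation is the routine exponential-family differentiation. A secondary point to verify is that the enlarged second-order block built from the visible-visible, visible-hidden, and hidden-hidden statistics is a legitimate set of free natural parameters of $B$, so that the stationarity conditions range over exactly those indices and no others.
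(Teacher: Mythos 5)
Your proposal is correct, and its mathematical core coincides with the paper's, but you organize the argument differently. The paper's proof has three parts: (i) uniqueness of $\Gamma_B(q)$ from the $e$-flatness of $B$ (read off from Equation~\ref{eq:thetaforBM}); (ii) attainability of the projection by gradient descent on the divergence, handled by citation; and (iii) the coordinate formula, obtained not by a fresh computation but by observing that on $S_{xh}$ the hidden units are themselves coordinates of the joint space, so a BM with hidden units is just a VBM over $\{x,h\}$, and then invoking Proposition~\ref{prop:sbmmlcloseform} (ML learning of VBM matches $[\eta^{2-}]$ and keeps $\theta_{2+}=0$). You instead characterize $\Gamma_B(q)$ variationally as the $m$-projection and carry out the exponential-family stationarity computation directly on the joint space: differentiating $KL(q\|p)=c_q-\sum_{|I|\le 2}\theta^I\eta_I(q)+\psi(\theta)$ gives exactly the moment-matching conditions $\eta_I(\Gamma_B(q))=\eta_I(q)$, $|I|\le 2$, and uniqueness follows from strict convexity of $\psi$ (equivalently the Legendre bijection) rather than from the IG theorem on projections onto $e$-flat submanifolds. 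These are two faces of the same fact --- the paper's log-likelihood gradient in Appendix~F \emph{is} your KL gradient up to sign --- and you correctly identified your argument as the hidden-unit analogue of Proposition~\ref{prop:sbmmlcloseform}; the paper simply reuses that proposition via the hidden-as-visible trick instead of redoing the algebra. What your route buys is self-containedness and an explicit treatment of existence/interiority of the minimizer (which the paper glosses over); what it omits is the ``learnt'' aspect of the statement, i.e.\ that gradient-descent training actually converges to this projection point, which the paper covers in its part~(ii) by citing the high-order BM literature --- a minor omission since that step is a citation rather than an argument in the paper as well.
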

\begin{proof}
This proof comes in three parts:
\begin{enumerate}
  \item the projection $\Gamma_B(q)$ of $q(x,h)$ on $B$ is unique;
  \item this unique projection $\Gamma_B(q)$ can be achieved by minimizing the divergence $D[q(x,h),B]$ using gradient descent method;
  \item The mixed coordinates of $\Gamma_B(q)$ is exactly the one given in Equation (\ref{eq:mixedcoordinatenewProjectionRBM}).
\end{enumerate}
See Appendix \ref{appendix:projectionRBMcloseform} for the detailed proof.
\end{proof}

\subsubsection{The Interpretation for BM Learning via CIF}
~\\
\vspace{-4mm}
\par
The iterative projection learning (IP) gives us an alternative way to investigate the learning process of BM.
Based on the CIF principle in Section \ref{sec:cif}, we can see that the process of the projection $\Gamma_{B}(q_i)$ can be derived from CIF, i.e., highly confident coordinates $[\eta_{x_i}^1, \eta_{h_j}^1, \eta_{x_ix_j}^2, \eta_{x_ih_j}^2, \eta_{h_ih_j}^2]$ of $q_i$ are preserved while lowly confident coordinates $[\theta_{2+}]$ are set to neutral value zero, given in Equation \ref{eq:mixedcoordinatenewProjectionRBM}.

In summary, the essential parts of the real distribution that can be learnt by BM (with and without hidden units) are exactly the confident coordinates indicated by the CIF principle.

\section{Experimental Study on Sample-specific CIF} \label{sec:cd-cif}
In this section, we will empirically investigate the sample-specific CIF principle in density estimation tasks for Boltzmann machines. More specifically, we aim to adaptively determine free parameters in BM, such that BM can trained be as close to the sample distribution as possible w.r.t. specific samples. For VBM, we will investigate how to use CIF to modify the topology of VBM by reducing less confident connections among visible units with respect to given samples. For BM with hidden units, we extend the traditional restricted BM (RBM) by allowing connections among visible units, called vRBM. Then we apply CIF on vRBM to emphasis the learning on confident connections among visible units.
\subsection{The Sample-specific CIF: Adaptive Model Selection of BM} \label{sec:samplespecificcif}
Based on our theoretical analysis in Section \ref{sec:cif} and Section \ref{sec:geometryBM}, BM uses the most confident information (i.e., $[\eta^{2-}]$) for approximating distributions in an expected sense.
However, for the distribution with specific samples, \emph{can CIF further recognize less-confident parameters and reduce them properly}?
Next, inspired by the general CIF, we introduce an adaptive network design for VBM based on given samples, which could automatically balance the model complexity of BM and the sample size. The data constrains the state of knowledge about the unknown distribution. Let $q(x)$ denote the sampling distribution (representing the data). In order to force the estimate of our probabilistic model (denoted as $p(x;\xi)$) to meet the data, we could incorporate the data into CIF by recognizing the confidence of parameters $\xi$ in terms of $q(x)$. Then, parametric reduction procedure can be further applied to modify the topology of VBM adaptively according to the data, as shown in Algorithm \ref{alg:vbm} and explained as in the following. Note that this algorithm can also be used in BM with hidden units, such as vRBM (see Section \ref{sec:experimentrbm}).
\begin{algorithm}
\caption{Adaptive Network Design for BM}
\label{alg:vbm}
\begin{algorithmic}
\REQUIRE Samples $D=\{d_1, d_2, \dots, d_N\}$; Significance level $\alpha$; Nodes $V=\{x_1, x_2, \dots, x_n\}$; Edges $U=\{U_{ij}, \forall x_i, x_j\}$;
\ENSURE Set of confident edges $U_{conf} \subset U$
\STATE $U_{conf} \leftarrow \{\}; \alpha \leftarrow 0.05$
\FOR{$ U_{ij} \in U$}
\STATE Estimate marginal distribution $p(x_i, x_j)$ from samples
\STATE \textbf{**} parameterize to $\zeta$-coordinates: $[\zeta]$ \textbf{**}
\STATE $\eta_i \leftarrow E_p[x_i]; \eta_j \leftarrow E_p[x_j]$
\STATE $\theta^{ij} \leftarrow \log p_{00} - \log p_{01} - \log p_{10} + \log p_{11}$
\STATE $[\zeta] \leftarrow \{\eta_i, \eta_j, \theta^{ij}\}$
\STATE
\STATE \textbf{**} Fisher information of $\theta^{ij}$ in $[\zeta]$ \textbf{**}
\STATE $g \leftarrow (\frac{1}{p_{00}}+\frac{1}{p_{01}}+\frac{1}{p_{10}}+\frac{1}{p_{11}})^{-1}$
\STATE
\STATE \textbf{**} confidence of $\theta^{ij}$ in $[\zeta]$ \textbf{**}
\STATE $\rho_{ij} \leftarrow \theta^{ij} \cdot g\cdot \theta^{ij}$
\STATE
\STATE \textbf{**} hypothesis test: $\rho_{ij} = 0$ against $\rho_{ij} \neq 0$ \textbf{**}
\STATE $\pi \leftarrow cdf_{\chi^2(1)}(N\rho_{ij})$
\IF{$ (1-\pi)\cdot 2 < \alpha $}
\STATE \textbf{**} reject null hypothesis: $\rho_{ij} = 0$ \textbf{**}
\STATE $U_{conf} \leftarrow U_{conf} \cup \{U_{ij}\}$
\ENDIF
\ENDFOR
\RETURN $U_{conf}$
\end{algorithmic}
\end{algorithm}

As a graphical model, the VBM comprises a set of vertices $V = \{x_1, x_2, \dots, x_n\}$ together with a set of connections $U=\{U_{ij}, \forall x_i, x_j, i\neq j\}$.
The confidence for each connection parameter $U_{ij}$ can be assessed by the parameter selection criterion in CIF, i.e., the contribution to the Fisher information distance.
Based on the Theorem 1 in \cite{amari92igbm}, $U_{ij}$ could be expressed as follows:
$$U_{ij}=\log \frac{p(x_i = x_j = 1 | A) \cdot p(x_i = x_j = 0| A)}{p(x_i = 1, x_j = 0 | A) \cdot p(x_i = 0, x_j = 1| A)}$$
where the relation hold for any conditions $A$ on the rest variables. However, it is often infeasible for us to calculate the exact value of $U_{ij}$ because of data sparseness. To tackle this problem, we propose to approximate the value of $U_{ij}$ by using the marginal distribution $p(x_i, x_j)$ to avoid the effect of condition $A$.

Let $[\zeta]_{ij}=(\eta_{i}, \eta_{j}, \theta^{ij})$ be the mixed-coordinates for the marginal distribution $p(x_i, x_j)$ of VBM. Note that each $\theta^{ij}$ corresponds to one connection $U_{ij}$. Since $\theta^{ij}$ is orthogonal to $\eta_i$ and $\eta_j$, the Fisher information distance between two distributions can be decomposed into two independent parts: the information distance contributed by $\{\eta_i, \eta_j\}$ and $\{\theta^{ij}\}$. For the purpose of parameter reduction, we consider the two close distributions $p_1$ and $p_2$ with coordinates $\zeta_1 = \{\eta_i, \eta_j, \theta^{ij}\}$ and $\zeta_2 = \{\eta_i, \eta_j, 0\}$ respectively. The confidence of $\theta^{ij}$, denoted as $\rho(\theta^{ij})$, can be estimated by its contribution to Fisher information distance between $p_1$ and $p_2$:
\begin{equation}\label{eq:fisherdistance2}
  \rho(\theta^{ij}) = (\zeta_1-\zeta_2)^T G_\zeta (\zeta_1-\zeta_2) = \theta^{ij} \cdot g_\zeta(\theta^{ij}) \cdot \theta^{ij}
\end{equation}
where $G_\zeta$ is the Fisher information matrix in Proposition \ref{prop:fishermatrix_mix} and $g_\zeta(\theta^{ij})$ is the Fisher information for $\theta^{ij}$. Note that the second equality holds since $\theta^{ij}$ is orthogonal to $\eta_i$ and $\eta_j$.

To decide whether the Fisher information distance in the coordinate direction of $\theta^{ij}$ is significant or negligible, we set up the hypothesis test for $\rho$, i.e., null hypothesis $\rho = 0$ versus alternative $\rho \neq 0$. Based on the analysis in \cite{Nakahara02informationgeometric}, we have $N\rho \sim \chi^2(1)$ asymptotically, where the $\chi^2(1)$ is chi-square distribution with degree of freedom 1 and $N$ is the sampling number. For example, for coordinate $\theta^{ij}$ with $N\rho=5.024$, we have a $95\%$ confidence to reject the null hypothesis. That is, we can ensure that the Fisher information distance in the direction of $\theta^{ij}$ is significant with probability 95\%.
\subsection{Experiments with VBM}\label{sec:experimentsbm}
In this section, we investigate the density estimation performance of CIF-based model selection methods for VBM. Three methods are compared:
\begin{enumerate}
  \item Rand-CV: we perform random selection of VBM's connections and the best model is selected based on $k$-fold cross validation.
  \item CIF-CV: connections are selected in descend order based on their confidences (defined in Equation \ref{eq:fisherdistance2}), constrained on the number of model free parameters. Then, the best model is selected based on $k$-fold cross validation.
  \item CIF-Htest: the topology of VBM is determined by the adaptive algorithm described in Algorithm \ref{alg:vbm}.
\end{enumerate}
%
%

\subsubsection{Experimental on artificial dataset}
~\\
\vspace{-4mm}
\par

The artificial binary dataset is generated as follows: we first randomly select the target distribution $q(x)$, which is randomly chosen from the open probability simplex over the $n$ random variables using the Jeffreys prior \cite{jeffreys1946invariant}. Then, the dataset with $N$ samples are generated from $q(x)$.
For computation simplicity, the artificial dataset is set to be 10-dimensional.
The CD learning algorithm is used to train the VBMs.

The Full-VBM, i.e., the VBM with full connections are used as baseline. $k$ is set to 5 for cross validation.
KL divergence is used to evaluate the goodness-of-fit of the VBM trained by various algorithms. For sample size $N$, we run 20 randomly generated distributions and report the averaged KL divergences.
Note that we focus on the case that the variable number is relatively small ($n\!\!=\!\!10$) in order to analytically evaluate the KL divergence and give a detailed study on algorithms. Changing the number of variables only offers a trivial influence for experimental results since we obtained qualitatively similar observations on various variable numbers (not reported here).

\textbf{Results and Summary:}
\begin{figure}
  \centering
  \includegraphics[width=0.5\textwidth]{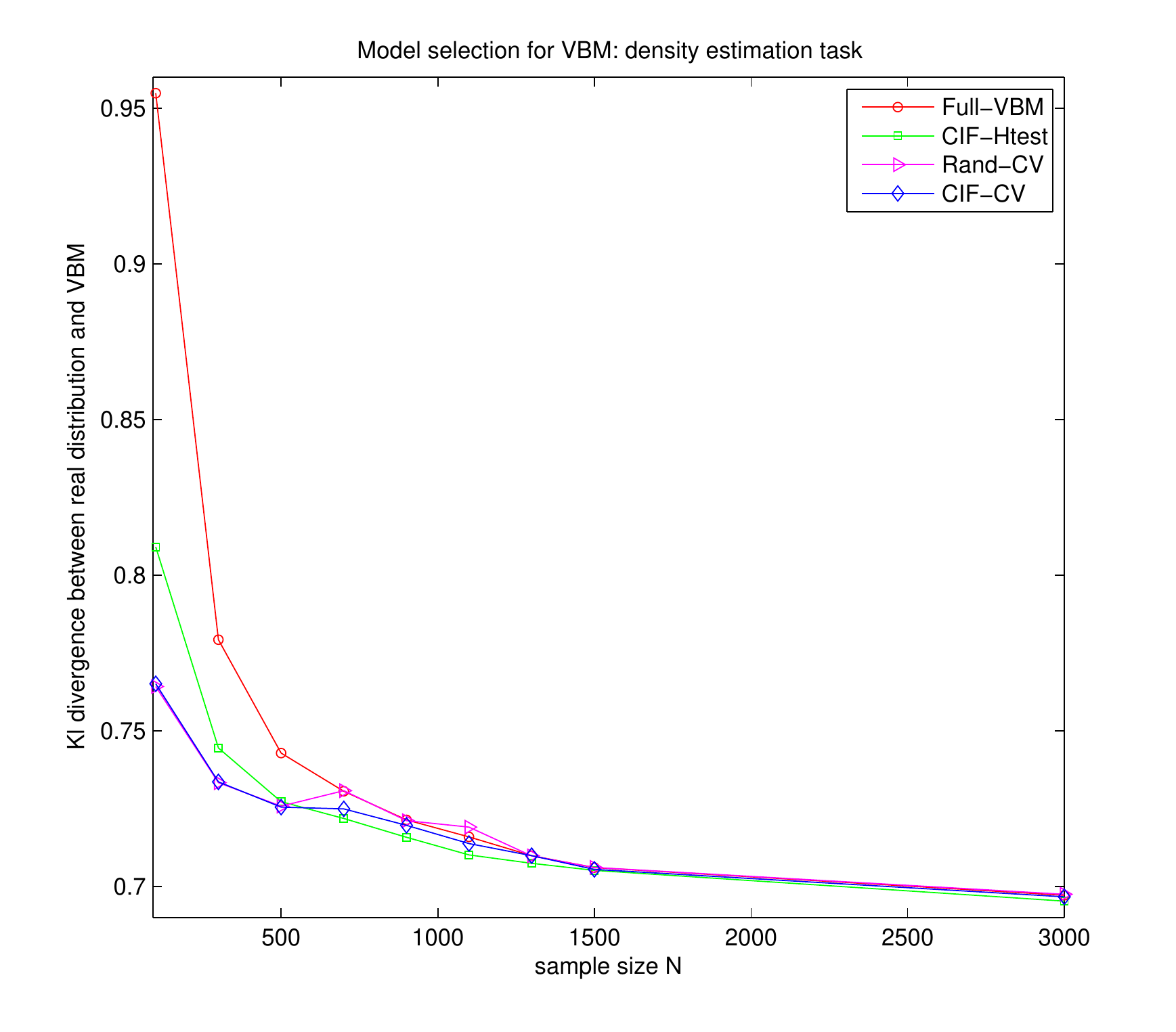}
  \caption{Density estimation results for VBM}
  \label{fig:result_vbm} 
\end{figure}
\begin{figure*}
  \centering
  \includegraphics[width=1.1\textwidth]{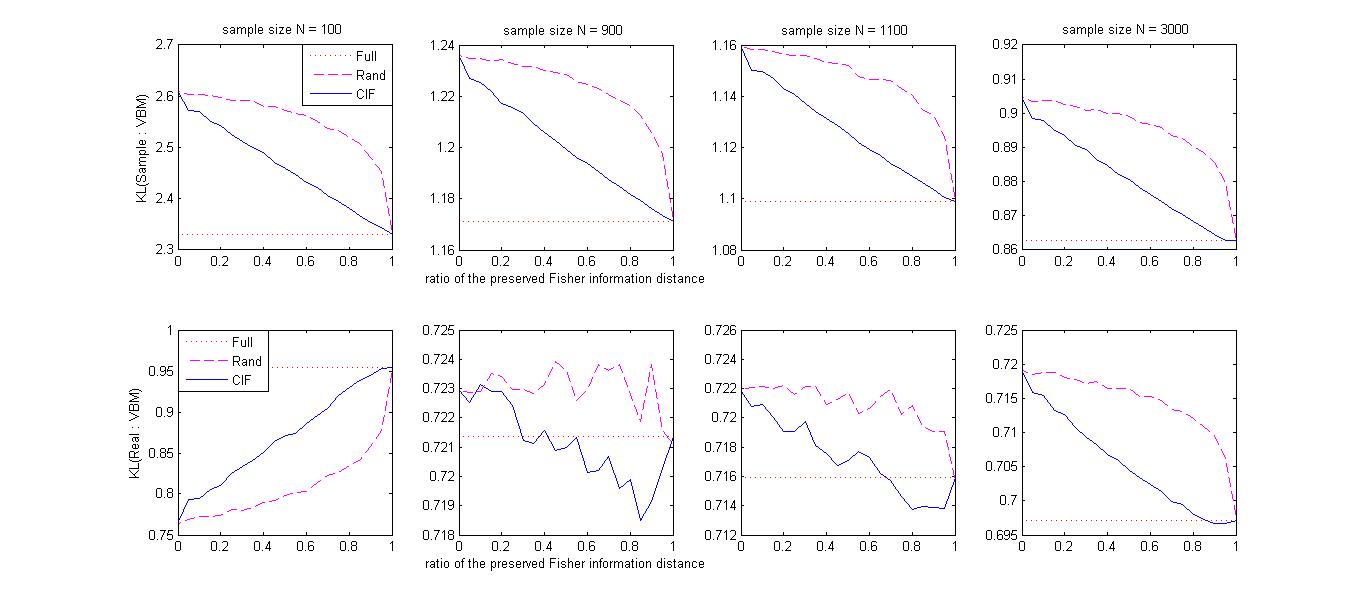}
  \caption{Changes of KL divergence between VBM and sample/real distributions w.r.t. model complexity. The model complexity is measured by the ratio $=($ sum of confidences for preserved connections$)/($ total confidences for all connections$)$}
  \label{fig:result_vbm_change} 
\end{figure*}
The averaged KL divergences between VBM and the underlying real distribution are shown in Figure \ref{fig:result_vbm}.
We can see that all model selection methods could improve density estimation results of VBM, especially when the sample size is small (N=100 to 1100). With relatively large samples, the effect of parameter reduction gradually becomes marginal.

Comparing CIF-CV with Rand-CV, the performances on relatively small sample size (N=100, 300, 500) are similar (see Figure \ref{fig:result_vbm}). This is because that the VBM reduced by cross validation is the trivial one with no connections. In Figure \ref{fig:result_vbm_change} (first column), we illustrate how the KL divergence between VBM and real/sample distribution changes along with different model complexities of VBM. We can see that although the CIF is worse than Rand in terms of the KL divergence between VBM and the real distribution in most setups of model complexity, CIF is better in estimating the sampling distribution. This is consistent with our previous theoretical insight that CIF preserves the most confident parameters in VBM where the confidence is estimated based on sampling distribution.

As sample size increases (N=700, 900, 1100), the CIF-CV gradually outperforms Rand-CV (see Figure \ref{fig:result_vbm}). This could be explained by the CIF principle. The CIF preserves the most confident parameters of VBM with respect to sample distribution. As sample size increases, the sampling distribution grows closer to real distribution, which could benefit CIF in the way that the KL divergence with both sample/real distributions can be simultaneous better than Rand, as shown in Figure \ref{fig:result_vbm_change} (second and third column).

With relatively large sample size (N=1500, 3000), the CIF-CV and Rand-CV have similar performance in terms of the KL divergence between VBM and the real distribution (see Figure \ref{fig:result_vbm}). This is because complex model is preferred when the samples are sufficient and both CIF-CV and Rand-CV tend to select the trivial VBM with full connections. Therefore, the difference between CIF-CV and Rand-CV becomes marginal. However, CIF is still more powerful in describing sample/real distributions compare to Rand, as shown in Figure \ref{fig:result_vbm_change} (fourth column).

For the two CIF-based algorithm, CIF-Htest is worse than CIF-CV when sample size is small and gradually achieves similar or better performance with CIF-CV along with the increasing sample size. The main advantage of CIF-Htest, w.r.t., CIF-CV, is that there is no need for the time-consuming cross validation.

In summary, the CIF-based model selection (e.g., CIF-CV and CIF-Htest) indicates the balance between VBM's model complexity and the amount of information learnt from samples. For nontrivial cases of model selection (trivial cases are the selected VBMs with no connections or full connections), CIF-based methods could reduce the KL divergence between VBM and real distributions without hurting the VBM too much in describing sample distribution, as compared to Rand (Fig. \ref{fig:result_vbm_change}). Moreover, the CIF-Htest provides us an automatic way to adaptively select suitable VBM with respect to given samples.

%

\subsubsection{Experiments on real datasets} \label{sec:realdata_vbm}
~\\
\vspace{-4mm}
\par
\begin{figure}
  \centering
  \includegraphics[width=0.4\textwidth]{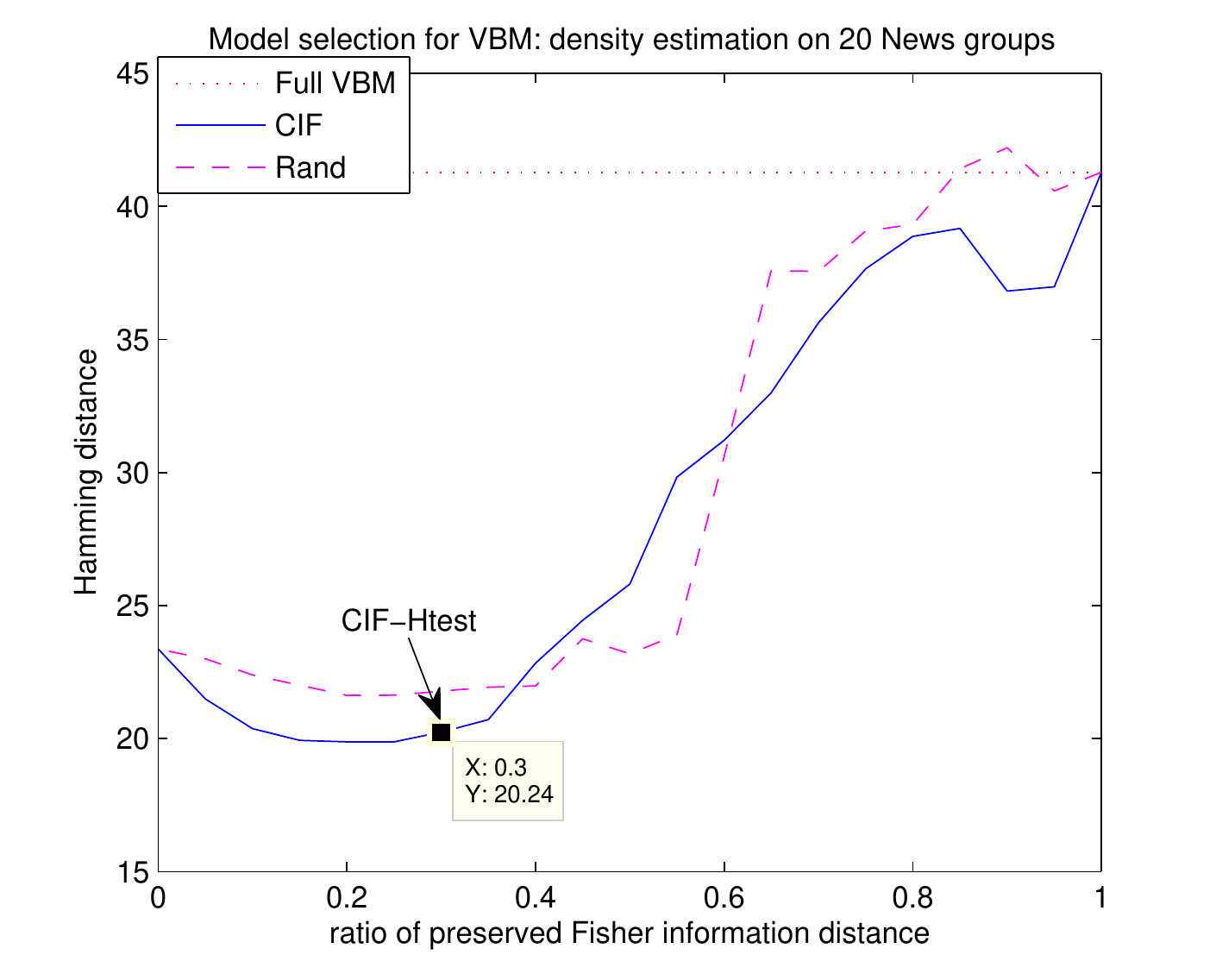}
  \caption{Performance changes on real dataset w.r.t. model complexity}
  \label{fig:20newsgroup} 
\end{figure}
In this section, we empirically investigate how the CIF-based model selection algorithm works on real-world datasets in the context of density estimation. In particular, we use the VBM to learn the underlying probability density over 100 terms of the \emph{20 News Groups} binary dataset, with different model complexities (changing the ratio of preserved Fisher information distance). There are 18000 documents in 20 News Groups in total, which is partitioned into two set: train set ($80\%$) and test set ($20\%$).The learning rate for CD is manually tuned in order to converge properly and all set to 0.01. Since it is infeasible to compute the KL devergence due to the high dimensionality, the averaged Hamming distance between the samples in the dataset and those generated from the VBM is used to evaluate the goodness-of-fit of the VBM's trained by various algorithms. Let $D=\{d_1, d_2,\dots, d_N\}$ denote the dataset of $N$ documents, where each document $d_i$ is a 100-dimensional binary vector. To evaluate a VBM with parameter $\xi_{vbm}$, we first randomly generate $N$ samples from the stationary distribution $p(x;\xi_{vbm})$, denoted as $V=\{v_1, v_2,\dots, v_N\}$. Then the averaged hamming distance $D_{ham}$ is calculated as follows:
$$D_{ham}[D,V]=\frac{\sum_{d_i} (\min_{v_j} (Ham[d_i, v_j])}{N}$$
where $Ham[d_i, v_j]$ is the number of positions at which the corresponding values are different.

Three kinds of VBMs are compared: the full VBM without parametric reduction; the VBMs of different model complexities using Rand; the VBMs of different model complexities using CIF. After training all VBMs on the training dataset, we evaluate the trained VBM on the test dataset. The result is shown in Figure \ref{fig:20newsgroup}. We also mark the VBM that is automatically selected by CIF-Htest.
We can see that the model selection (CIF and Rand) achieves significantly better performances than the full VBM for a wide range of $r$, which is consistent with our observations with the experiments on artificial datasets when the samples is insufficient. And the best performance for CIF outperforms that of Rand. The performance of CIF-Htest (ratio=0.3) is also shown in Fig \ref{fig:20newsgroup}, which is close to the optimal solution (ratio=0.2).

\subsection{Experiments with vRBM} \label{sec:experimentrbm}
\begin{figure*}
  \centering
  \includegraphics[width=1\textwidth]{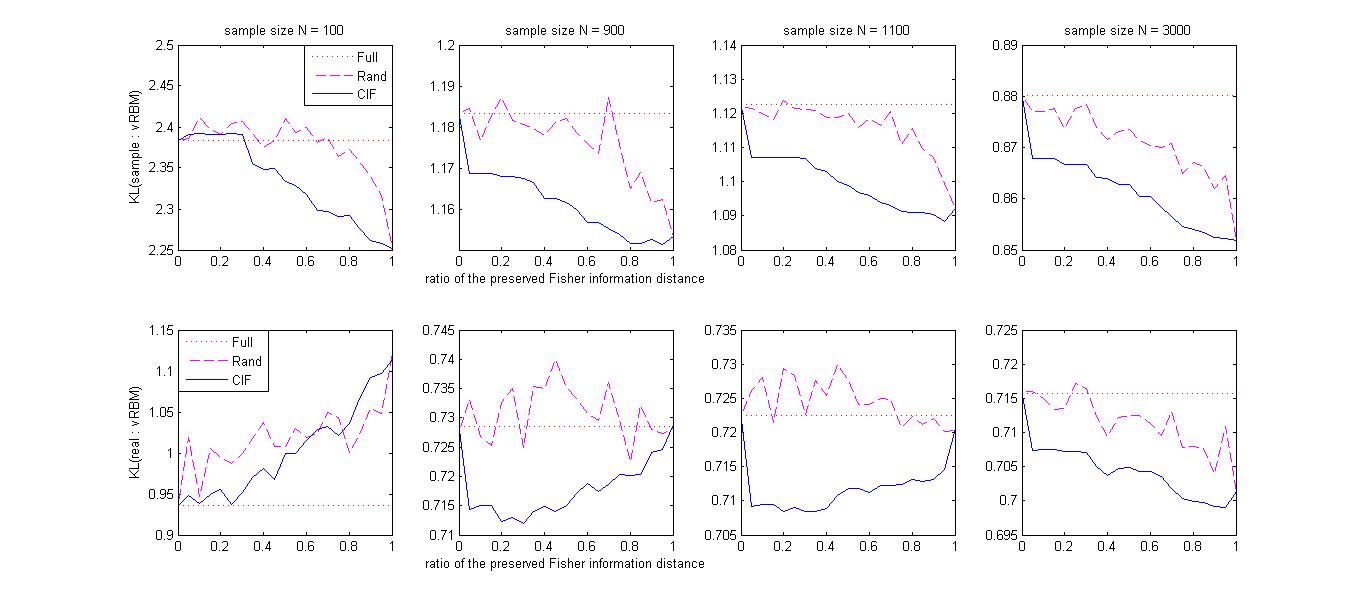}
  \caption{Changes of KL divergence between vRBM and sample/real distributions w.r.t. model complexity. The model complexity is measured by the ratio $=($ sum of confidences for preserved visible connections$)/($ total confidences for all visible connections$)$}
  \label{fig:result_vrbm_change} 
\end{figure*}
The BM with hidden units is practically more interesting than VBM, since it has higher representation power.
Particularly, one of the fundamental problem in neural network research is the unsupervised representation learning \cite{Bengio12representationlearning}, which attempts to characterize the underlying distribution through the discovery of a set of latent variables (or features). The restricted BM (RBM) is one of the most widely used models for learning one level of feature extraction. Then, in deep learning models\cite{hinton06}, the representation learnt at one level is used as input for learning the next level.
In this section, we will extend the RBM by allowing connections among visible units (called vRBM, shown in Figure \ref{fig:vRBM}) and further investigate the CIF-based model selection algorithm (see Section \ref{sec:samplespecificcif}) empirically. Note that in the model selection for vRBM, only parameters in $U$ (connections within visible units) are affected and all parameters in $\{W,b,d\}$ (connections between hidden and visible units, visible/hidden self-connections) are preserved, so as to maintain the structure of vRBM.


\subsubsection{Experimental Setup}
~\\
\vspace{-4mm}
\par
For computational simplicity, the artificial dataset is of 10 dimensionality, and the number of hidden units in vRBM is set to 10.
For the model selection of vRBM, three methods are compared: CIF-CV, Rand-CV, CIF-Htest, which are the same model selection scheme for VBM in Section \ref{sec:experimentsbm}. The standard RBM is adopted as baseline. Note that for model selection, we are actually adding connections among visible units to the standard RBM.
The learning algorithms is CD. KL divergence is used to evaluate the goodness-of-fit of the vRBM's trained by various algorithms.
\subsubsection{Results and Summary}
~\\
\vspace{-4mm}
\par
\begin{figure}
  \centering
  \includegraphics[width=0.5\textwidth]{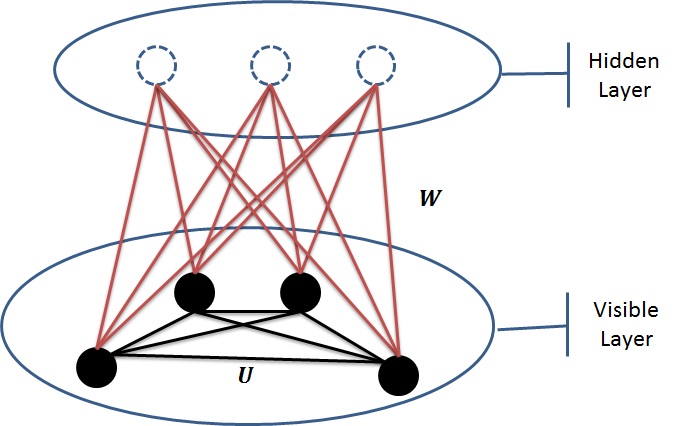}
  \caption{The network structure of vRBM}
  \label{fig:vRBM} 
\end{figure}
\begin{figure}
  \centering
  \includegraphics[width=0.5\textwidth]{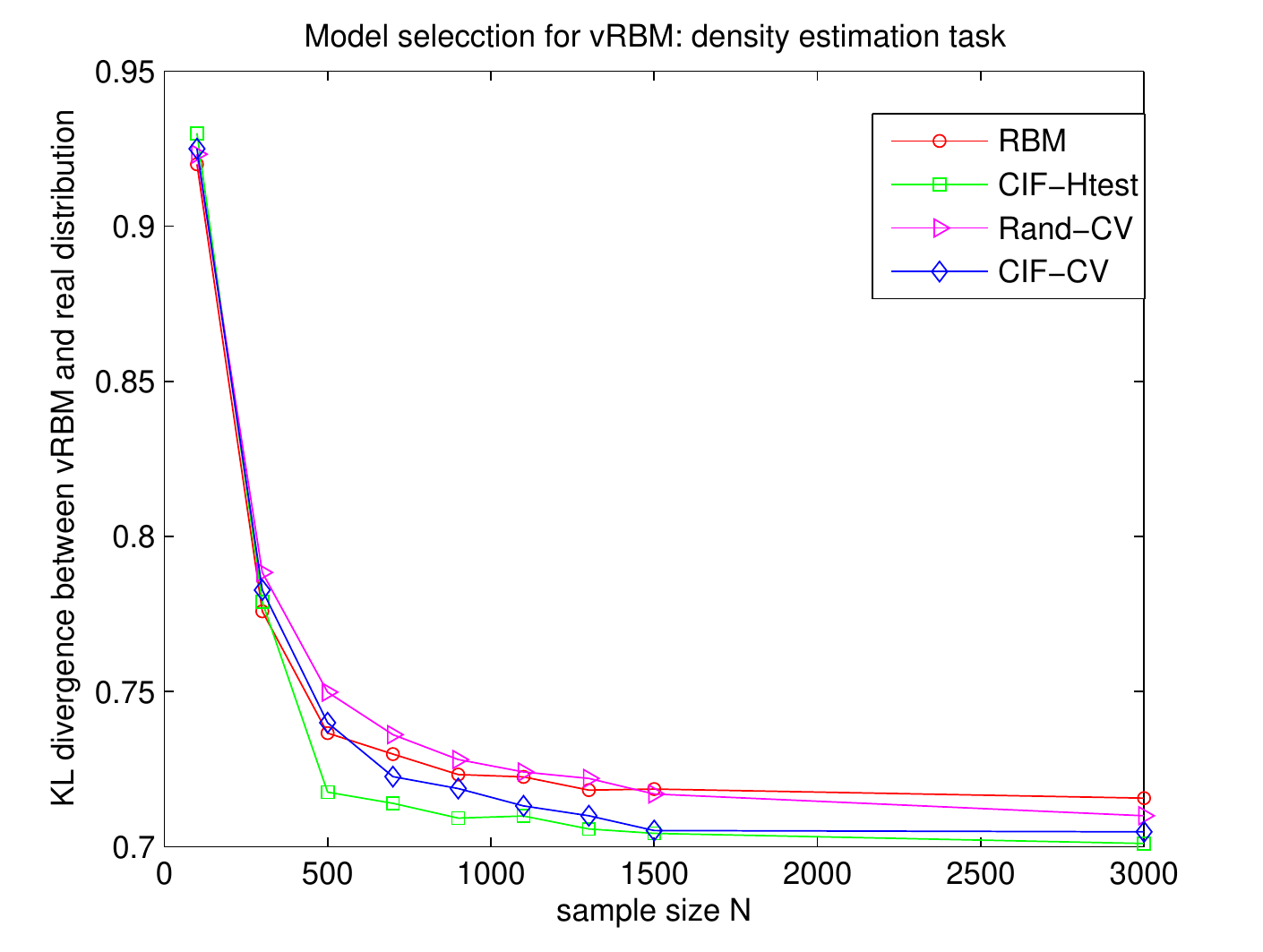}
  \caption{Density estimation results of vRBM}
  \label{fig:result_vrbm} 
\end{figure}
The averaged KL divergences between vRBM and the underlying distribution are shown in Figure \ref{fig:result_vrbm}.
For a small sample size (N=100,300), there is not so much need to increase the model complexity of BM and hence adding connections among visible units does not improve the density estimation results. While, as sample size increases (from 500 to 3000), model selection methods gradually outperforms standard RBM significantly by adding connections among visible units.

Comparing CIF-CV with Rand-CV, the performances on relatively small sample size (N=100, 300) are similar (see Figure \ref{fig:result_vrbm}). This is because that the vRBM selected by cross validation is the trivial one with no connections added to RBM. In Figure \ref{fig:result_vrbm_change} (first column), we illustrate how the KL divergence between vRBM and real/sample distribution changes along with different model complexities of vRBM. We can see that CIF is better in estimating the sample distribution compared with Rand (first column, first row). Similary with VBM, the performance for estimating the real distribution of CIF becomes similar or worse than Rand along with the increasing model complexity (ratio$>0.6$) due to overfitting (first column, second row).

As sample size increases (N=500 to 1500), the CIF-CV gradually outperforms Rand-CV (see Figure \ref{fig:result_vrbm}). This could be explained by the CIF principle. The CIF-CV preserves the most confident parameters of vRBM with respect to sample distribution. As sample size increases, the sampling distribution grows closer to real distribution, which could benefit CIF in the way that the KL divergence with both sample/real distributions can be simultaneous better than Rand for all model complexities, as shown in Figure \ref{fig:result_vrbm_change} (second and third column).

With larger sample size (N$>=$3000), the CIF-CV and Rand-CV have similar performance in terms of the KL divergence between vRBM and the real distribution. This is because complex model is preferred when the samples are sufficient and both CIF-CV and Rand-CV tend to select the trivial vRBM with all connections between visible units being added to RBM. Therefore, the difference between CIF-CV and Rand-CV becomes marginal. However, CIF are still more powerful in describing sample distribution compare to Rand for the vRBM with the same number of connections, as shown in Figure \ref{fig:result_vrbm_change} (fourth column).

For the two CIF-based algorithm, CIF-Htest is worse than CIF-CV when sample size is small and gradually outperforms CIF-CV along with the increasing sample size.

In summary, the CIF-based model selection could balance between vRBM's model complexity and the amount of information learnt from samples and could simultaneously reduce the KL divergence between vRBM and real/sample distributions. This indicates that the CIF is also useful for BM with hidden units.
%

\section{Conclusions and Future works} \label{sec:conclusions}
In this paper, we study the parametric reduction and model selection problem of Boltzmann machines from both theoretical and applicational perspectives. On the theoretical side, we propose the CIF principle for the parametric reduction to maximally preserve the confident parameters and ruling out less confident ones. For binary multivariate distributions, we theoretically show that CIF could lead to an optimal submanifold in terms of Equation \ref{eq:parametricreduction2}. Furthermore, we illustrate that the Boltzmann machines (with or without hidden units) can be derived from the general manifold based on CIF principle. In future works, the CIF could be the start of an information-oriented interpretation of deep learning models where BM is used as building blocks. For deep Boltzmann machine (DBM) \cite{Salakhutdinov2012}, several layers of RBM compose a deep architecture in order to achieve a representation at a sufficient abstraction level. The CIF principle describes how the information flows in those representation transformations, as illustrated in Figure \ref{fig:deeparchitecture}. We propose that each layer of DBM determines a submanifold $M$ of $S$, where $M$ could maximally preserve the highly confident information on parameters. Then the whole DBM can be seen as the process of repeatedly applying CIF in each layer, achieving the tradeoff between the abstractness of representation features and the intrinsic information confidence preserved on parameters. The more detailed analysis on deep models will be left as further works.
\begin{figure}
  \centering
  \includegraphics[width=0.5\textwidth]{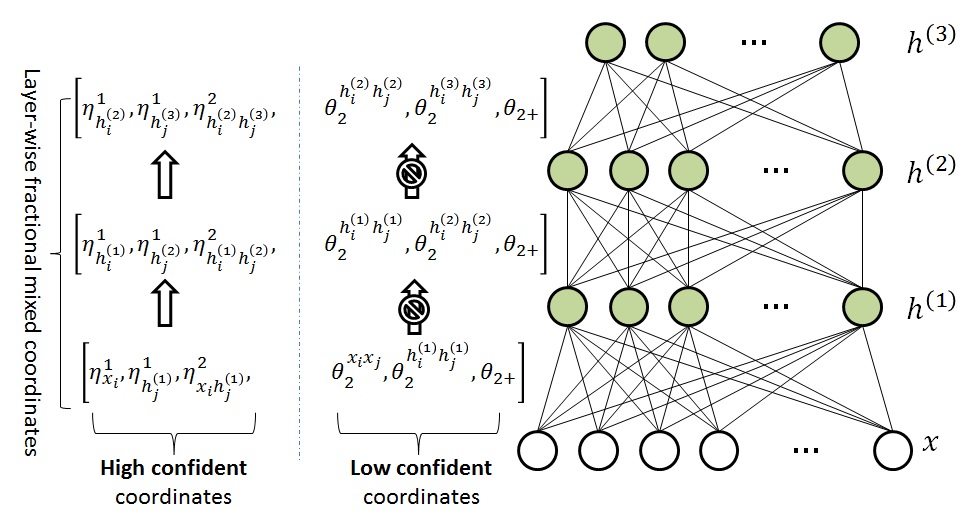}
  \caption{A multi-layer BM with visible units $x$ and hidden layers $h^{(1)}$, $h^{(2)}$ and $h^{(3)}$. The greedy layer-wise training of deep architecture is to maximally preserve the confident information layer by layer. Note that the prohibition sign indicates that the Fisher information on lowly confident coordinates is not preserved.}
  \label{fig:deeparchitecture} 
\end{figure}

On the applicational side, we propose a sample-specific CIF-based model selection scheme for BM, i.e., CIF-Htest, that could automatically adapt to the given samples. It is studied in a series of density estimation experiments. In the further work, we plan to incorporate the CIF-Htest into deep learning models (such as DBM) to modify the network topology such that the most confident information in data can be well captured.

\appendices
\section{Proof of Proposition \ref{prop:fishermatrix}} \label{appendix:thetafisher}
\begin{proof}
By definition, we have:
\begin{equation}\label{eq:propFisher1}
 g_{IJ}=\frac{\partial^2 \psi(\theta)}{\partial \theta ^I \partial \theta ^J} \nonumber
\end{equation}
where $\psi(\theta)$ is defined by Equation (\ref{eq:legedre}). Hence, we have:
\begin{eqnarray}\label{eq:propFisher2}
 g_{IJ}=\frac{ \partial^2 (\sum_{I} {\theta^I \eta_I} - \phi(\eta))}{\partial \theta^I \partial \theta^J} = \frac{\partial \eta_I}{\partial \theta^J} \nonumber
\end{eqnarray}
By differentiating $\eta_I$, defined by Equation (\ref{eq:etacoordinate}), with respect to $\theta^J$, we have:
\begin{eqnarray}\label{eq:propFisher3}
 g_{IJ}\!\!&=&\!\!\frac{\partial \eta_I}{\partial \theta^J} = \frac{\partial \sum_x X_I(x)(exp\{\sum_I{\theta^I X_I(x)} - \psi(\theta)\})}{\partial \theta^J}\nonumber \\
 \!\!&=&\!\! \sum_x {X_I(x) [X_J(x) - \eta_J] p(x;\theta)} = \eta_{I\cup J}-\eta_I \eta_J \nonumber
\end{eqnarray}
This completes the proof.
\end{proof}

\section{Proof of Proposition \ref{prop:fishermatrix_eta}} \label{appendix:etafisher}
\begin{proof}
By definition, we have:
\begin{equation}\label{eq:propEtaFisher}
 g^{IJ} = \frac{\partial^2 \phi(\eta)}{\partial \eta_I \partial \eta_J} \nonumber
\end{equation}
where $\phi(\eta)$ is defined by Equation (\ref{eq:legedre}). Hence, we have:
\begin{eqnarray}\label{eq:propEtaFisher2}
 g^{IJ} &=& \frac{\partial^2 (\sum_{J} {\theta^J \eta_J} - \psi(\theta))}{\partial \eta_I \partial \eta_J} = \frac{\partial \theta^I}{\partial \eta_J} \nonumber
\end{eqnarray}
Based on Equations (\ref{eq:thetacoordinate}) and (\ref{eq:etacoordinate}), the $\theta^I$ and $p_{K}$ could be calculated by solving a linear equation of $[p]$ and $[\eta]$, respectively. Hence, we have:
\begin{equation}\label{eq:propThetaP}
 \theta^I = \sum_{K \subseteq I} (-1)^{|I-K|} log(p_{K});~~p_{K} = \sum_{K \subseteq J} (-1)^{|J-K|} \eta_{J} \nonumber
\end{equation}
Therefore, the partial derivation of $\theta^I$ with respect to $\eta_J$ is:
\begin{eqnarray}\label{eq:propEtaFisher3}
 g^{IJ}\!\!=\!\!\frac{\partial \theta^I}{\partial \eta_J}\!\! =\!\!\sum_{K} \frac{\partial \theta^I}{\partial p_{K}} \cdot \frac{\partial p_{K}}{\partial \eta_J}\!\! =\!\! \sum_{K\subseteq I\cap J}{(-1)^{|I-K|+|J-K|} \cdot \frac{1}{p_{K}}} \nonumber
\end{eqnarray}
This completes the proof.
\end{proof}

\section{Proof of Proposition \ref{prop:fishermatrix_mix}} \label{appendix:mixfisher}
\begin{proof}
The Fisher information matrix of $[\zeta]$ could be partitioned into four parts: $G_\zeta= \left(
    \begin{array}{cc}
     A & C \\
     D & B \\
    \end{array}
    \right)$.
It can be verified that in the mixed coordinate, the $\theta$-coordinate of order $k$ is orthogonal to any $\eta$-coordinate less than $k$-order, implying the corresponding element of the Fisher information matrix is zero ($C=D=0$) \cite{Nakahara02informationgeometric}. Hence, $G_\zeta$ is a block diagonal matrix.

According to the Cram\'{e}r--Rao bound \cite{rao45attainable}, a parameter (or a pair of parameters) has a unique asymptotically tight lower bound of the variance (or covariance) of the unbiased estimate, which is given by the corresponding element of the inverse of the Fisher information matrix involving this parameter (or this pair of parameters). Recall that $I_\eta$ is the index set of the parameters shared by $[\eta]$ and $[\zeta]_l$ and that $J_\theta$ is the index set of the parameters shared by $[\theta]$ and $[\zeta]_l$; we have $(G_\zeta^{-1})_{I_\zeta} = (G_\eta^{-1})_{I_\eta}$ and $(G_\zeta^{-1})_{J_\zeta} = (G_\theta^{-1})_{J_\theta}$, \textit{i.e}.,
$$G_{\zeta}^{-1}= \left(
    \begin{array}{cc}
     (G_\eta^{-1})_{I_\eta} & 0 \\
     0 & (G_\theta^{-1})_{J_\theta} \\
    \end{array}
    \right)$$
Since $G_\zeta$ is a block tridiagonal matrix, the proposition follows.
\end{proof}

\section{Proof of Proposition \ref{prop:GeometricView}} \label{appendix:geometriccif}

\begin{proof}
Let $B_q$ be a $\varepsilon$-ball surface centered at $q(x)$ on manifold $S$, \textit{i.e}., $B_q= \{q' \in S |\| KL(q,q') = \varepsilon\}$, where $KL(\cdot,\cdot)$ denotes the Kullback--Leibler divergence and $\varepsilon$ is small. $\zeta_q$ is the coordinates of $q(x)$. Let $q(x)+dq$ be a neighbor of $q(x)$ uniformly sampled on $B_q$ and $\zeta_{q(x)+dq}$ be its corresponding coordinates. For a small $\varepsilon$, we can calculate the expected square Fisher information distance between $q(x)$ and $q(x)+dq$ as follows:
\begin{equation}\label{eq:expectdistance}
 E_{B_q}=\int (\zeta_{q(x)+dq}-\zeta_q)^T G_\zeta (\zeta_{q(x)+dq}-\zeta_q) dB_q
\end{equation}
where $G_\zeta$ is the Fisher information matrix at $q(x)$.

Since Fisher information matrix $G_\zeta$ is both positive definite and symmetric, there exists a singular value decomposition $G_\zeta = U^T \Lambda U$ where $U$ is an orthogonal matrix and $\Lambda$ is a diagonal matrix with diagonal entries equal to the eigenvalues of $G_\zeta$ (all $\geq 0$), i.e., $(\lambda_1, \lambda_2, \dots, \lambda_n)$.

Applying the singular value decomposition into Equation (\ref{eq:expectdistance}), the expectation becomes:
\begin{equation}\label{eq:expectdistance2}
 E_{B_q}\!\!=\!\!\!\!\int (\zeta_{q(x)+dq}-\zeta_q)^T U^T \Lambda U (\zeta_{q(x)+dq}-\zeta_q) dB_q
\end{equation}
Note that $U$ is an orthogonal matrix, and the transformation $U (\zeta_{q(x)+dq}-\zeta_q)$ is a norm-preserving rotation.

Now, we need to show that among all tailored $k$-dimensional submanifolds of $S$, $[\zeta]_{l_t}$ is the one that preserves maximum information distance. Assume $I_{T}=\{i_1, i_2, \dots, i_k\}$ is the index of $k$ coordinates that we choose to form the tailored submanifold $T$ in the mixed-coordinates $[\zeta]$.

First, according to the fundamental analytical properties of the surface of the hyper-ellipsoid, we will show that there exists a strict positive monotonicity between the expected information distance $E_{B_q}$ for $T$ and the sum of eigenvalues of the sub-matrix $(G_\zeta)_{I_T}$.

Based on the definition of hyper-ellipsoid, the $\varepsilon$-ball surface $B_q$ is indeed the surface of a hyper-ellipsoid (centered at $q(x)$) determined by $G_\zeta$. Let the eigenvalues of $G_\zeta$ be $\lambda_1 > \lambda_2 > \dots > \lambda_n > 0$. Then, the surface integral on $B_q$ can be decomposed as follows:
\begin{eqnarray}
  E_{B_q} &=& \int v^T \Lambda v dB_q \nonumber \\
   &=& \int v^T diag(\lambda_1) v dB_q + \int v^T diag(\lambda_2) v dB_q \nonumber\\
   && + \dots + \int v^T diag(\lambda_n) v dB_q \nonumber
\end{eqnarray}
where $v=(\zeta_{q(x)+dq}-\zeta_q)U$ and $diag(\lambda_i)$ is the diagonal matrix with the main diagonal $(0,\dots,\lambda_i,\dots,0)$.
We can show the monotonicity between the integration values and eigenvalues:
\begin{equation}\label{eq:monotonicinequality}
  \int_{B_q} v^T diag(\lambda_1) v > \int_{B_q} v^T diag(\lambda_2) v > \dots > \int_{B_q} v^T diag(\lambda_n) v
\end{equation}

The surface of the ellipsoid $B_q$ may be parameterized in several ways. In terms of Cartesian coordinate systems, the equation of $B_q$ is:
$$\frac{v_1^2}{r_1^2}+\frac{v_2^2}{r_2^2}+\dots+\frac{v_n^2}{r_n^2}=\varepsilon$$
where $r_i^2$ denotes the squares of the semi-axes and is determined by the reciprocals of the eigenvalues, i.e., $ r_i^2= \frac{1}{\lambda_i}$.

Consider the two-dimensional ellipsoid $B_q$:
$$\frac{v_1^2}{r_1^2}+\frac{v_2^2}{r_2^2}=\varepsilon, (r_1 < r_2)$$
Then we can transform the Cartesian coordinates to Spherical coordinates as follows:
$$\left\{
    \begin{array}{ll}
      v_1= r_1 \cdot \cos \theta \\
      v_2 = r_2 \cdot \sin \theta
    \end{array}
  \right.
$$
where $0 \le \theta \le 2\pi$. To prove that $\int_{B_q} v^T diag(\lambda_1) v > \int_{B_q} v^T diag(\lambda_2) v$, we need to show that:
$\int_{B_q} |\cos \theta| > \int_{B_q} |\sin \theta|$. Since the ellipsoid is symmetric, we only need to prove that:
$$\int_{B_q} \cos \theta > \int_{B_q} \sin \theta, \theta \in [0,\frac{\pi}{2}]$$

By reformulating the above surface integral in terms of definite integral, we have:
\begin{equation}\label{eq:definiteintegral}
  \int_0^{\frac{\pi}{2}} \!\! \cos \theta \sqrt{r_1^2 \!\!+\!\! (r_2^2 \!\!- \!\!r_1^2)\cos^2 \theta}  > \!\! \int_0^{\frac{\pi}{2}} \!\! \sin \theta \sqrt{r_1^2 \!\!+\!\! (r_2^2\!\! -\!\! r_1^2)\sin^2 \theta}
\end{equation}

Next we will prove the above inequality based on the definition of Riemann integral. The integral interval $[0,\frac{\pi}{2}]$ of $\theta$ can be partitioned into a finite sequence of subintervals, i.e., $[\theta_i,\theta_{i+1}]$, where $$0=\theta_0<\dots<\theta_i<\theta_{i+1}<\dots<\theta_m=\frac{\pi}{2}$$

Let $\beta = max_{0\le i < n} |\theta_{i+1} - \theta_i|$ be the maximum length of subintervals. When $\beta$ approaches infinitesimal (hence $n \rightarrow \infty$), the definite integral equals to the Riemann integral, i.e., the limit of the Riemann sums. Therefore, the inequality \ref{eq:definiteintegral} can be transformed in to the comparison between two limitations:
\begin{equation}\label{eq:definiteintegrallimit}
 \!\!\!\! \lim_{\beta \rightarrow 0} \overrightarrow{\cos \theta} \cdot \overrightarrow{\sqrt{r_1^2 \!\!+\!\! (r_2^2 \!\!- \!\!r_1^2)\cos^2 \theta}} > \lim_{\beta \rightarrow 0} \overrightarrow{\sin \theta} \cdot \overrightarrow{\sqrt{r_1^2 \!\!+\!\! (r_2^2 \!\!- \!\!r_1^2)\cos^2 \theta}}
\end{equation}
where the Riemann sums are denoted by vector multiplications and $\overrightarrow{\cos \theta}=(\cos \theta_0,\dots,\cos \theta_n)$ and $\overrightarrow{\sin \theta}=(\sin \theta_0,\dots,\sin \theta_n)$.
Since $\cos 0 = \sin \frac{\pi}{2}$ and $\cos \frac{\pi}{2} = \sin 0$, $\overrightarrow{\cos \theta}$ and $\overrightarrow{\sin \theta}$ can be seen as two vectors that share the same components while arranged in different orders. We can also see that there is a positive correlation between the components in $\overrightarrow{\cos \theta}$ and $\overrightarrow{\sqrt{r_1^2 \!\!+\!\! (r_2^2 \!\!- \!\!r_1^2)\cos^2 \theta}}$, while there is a negative correlation between the components in $\overrightarrow{\sin \theta}$ and $\overrightarrow{\sqrt{r_1^2 \!\!+\!\! (r_2^2 \!\!- \!\!r_1^2)\cos^2 \theta}}$. Then, the inequality \ref{eq:definiteintegral} is proved. Therefore, the monotonicity \ref{eq:monotonicinequality} holds for the two-dimensional ellipsoid.

%

Similarly, the above Reimann integral analysis on the spherical coordinates can be extended to the $n$-dimensional ellipsoid. Hence, the monotonicity \ref{eq:monotonicinequality} holds for standard ellipsoids.

Thus, based on the monotonicity \ref{eq:monotonicinequality}, the expected Fisher information distance that can be preserved by a $k$-dimensional standard ellipsoid is monotonic with the sum of eigenvalues of the selected $k$ eigenvectors. To maximize the preserved information distance, we should choose the top-$k$ eigenvalues, i.e., ${\lambda_1, \dots, \lambda_k}$.

Since $G_\zeta$ is a block diagonal matrix, the eigenvalues of $G_\zeta$ are the combined eigenvalues of its blocks. Based on Lemma \ref{prop:fishermatrix_mixDiagonal}, the elements on the main diagonal of the sub-matrix $A$ are lower bounded by one and those of $B$ upper bounded by one. Since the the sum of eigenvalues equals to the trace, the top-$k$ eigenvalues is exactly the eigenvalues of sub-matrix $A$. Thus, we have:
\begin{eqnarray}\label{eq:rotationinvairant}
  &&\!\!\!\!\!\!\!\!E_{max} =\int v^T diag(\lambda_1) v dB_q + \dots + \int v^T diag(\lambda_k) v dB_q \nonumber \\
   &=& \!\!\!\!\int (\zeta_{q(x)+dq}-\zeta_q)^T \left(
  \begin{array}{cc}
   A & 0 \\
   0 & 0 \\
  \end{array}
  \right) (\zeta_{q(x)+dq}-\zeta_q) dB_q
\end{eqnarray}

Now, let us consider the best selection of coordinates ${I_T}$ in $[\zeta]$. It is easy to see that the rotation operation $U$ in Equation \ref{eq:expectdistance2} does not affect the integral value of the expected Fisher information distance. Therefore, based on Equation \ref{eq:rotationinvairant}, $I_T = \{\eta^{2-}\}$ gives the maximum Fisher information distance. This completes the proof.
\end{proof}
\begin{lemma}\label{prop:fishermatrix_mixDiagonal}
For Fisher information matrix $G_\zeta$, the diagonal elements of $A$ are lower bounded by one, and those of $B$ are upper bounded by one.
\end{lemma}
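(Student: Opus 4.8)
The plan is to exploit the duality between the two dual coordinate systems recalled in Section~\ref{sec:fisherinformationmatrix}, namely that $G_\eta$ and $G_\theta$ are mutually inverse ($G_\eta = G_\theta^{-1}$), so that both blocks become inverses of \emph{principal submatrices} of a single Fisher matrix. Concretely, since $(G_\eta^{-1})_{I_\eta} = (G_\theta)_{I_\eta}$ and $(G_\theta^{-1})_{J_\theta} = (G_\eta)_{J_\theta}$, Proposition~\ref{prop:fishermatrix_mix} lets me rewrite $A = ((G_\theta)_{I_\eta})^{-1}$ and $B = ((G_\eta)_{J_\theta})^{-1}$. The lemma then reduces to two elementary facts about how matrix inversion interacts with taking principal submatrices of a positive definite matrix, combined with the explicit diagonal entries supplied by Proposition~\ref{prop:fishermatrix}.

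For the lower bound on the diagonal of $A$, I would first record the elementary Schur-complement inequality that for any positive definite $P$ one has $(P^{-1})_{ii} \ge 1/P_{ii}$: singling out the $i$-th coordinate, $(P^{-1})_{ii}$ equals the reciprocal of the Schur complement $P_{ii} - P_{i,\bar i}P_{\bar i,\bar i}^{-1}P_{\bar i,i}$, which is at most $P_{ii}$. Applying this with $P = (G_\theta)_{I_\eta}$ gives $A_{I,I} \ge 1/(G_\theta)_{I,I}$. By Proposition~\ref{prop:fishermatrix}, $(G_\theta)_{I,I} = g_{II}(\theta) = \eta_I - \eta_I^2 = \eta_I(1-\eta_I)$, and since $\eta_I \in (0,1)$ is a probability we have $\eta_I(1-\eta_I) \le 1/4$. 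Hence $A_{I,I} \ge 1/(\eta_I(1-\eta_I)) \ge 4 \ge 1$, which is in fact stronger than claimed.

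For the upper bound on the diagonal of $B$ the inequality must run the other way, so I would instead invoke the complementary submatrix-monotonicity fact: for positive definite $M$ and an index set $S$, the inverse of a submatrix is dominated in the Loewner order by the submatrix of the inverse, $(M_S)^{-1} \preceq (M^{-1})_S$. This again follows from the Schur complement, since $(M^{-1})_S = (M_S - M_{S,\bar S}M_{\bar S,\bar S}^{-1}M_{\bar S,S})^{-1} \succeq (M_S)^{-1}$. Taking $M = G_\eta$ and $S = J_\theta$ yields $B = ((G_\eta)_{J_\theta})^{-1} \preceq (G_\eta^{-1})_{J_\theta} = (G_\theta)_{J_\theta}$, so comparing diagonal entries, $B_{J,J} \le (G_\theta)_{J,J} = \eta_J(1-\eta_J) \le 1/4 \le 1$.

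The main thing to get right is this asymmetry: the two halves are \emph{not} proved by the same inequality. The lower bound on $A$ comes from $(P^{-1})_{ii}\ge 1/P_{ii}$, whereas the upper bound on $B$ needs the Loewner-order domination $(M_S)^{-1}\preceq(M^{-1})_S$; using one where the other is required would push the bound in the wrong direction. Once the correct inequality is paired with the correct block, both estimates collapse to the trivial scalar bound $\eta(1-\eta)\le 1/4$ for a probability $\eta$, so no genuine computation remains.
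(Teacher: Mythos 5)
Your proof is correct, and its skeleton coincides with the paper's: both use the duality $G_\eta = G_\theta^{-1}$ to rewrite $A=((G_\theta)_{I_\eta})^{-1}$ and $B=((G_\eta)_{J_\theta})^{-1}$, and both then feed in the explicit diagonal $g_{II}(\theta)=\eta_I-\eta_I^2<1$ from Proposition~\ref{prop:fishermatrix}. The only substantive difference is the auxiliary inequality used for $A$: the paper proves its Lemma~\ref{lemma:diagOfmix} ($H_{ii}<1 \Rightarrow (H^{-1})_{ii}>1$) by writing $H$ and $H^{-1}$ as Gram matrices of biorthogonal vectors $v_i, w_i$ with $\langle w_i, v_i\rangle = 1$ and invoking Cauchy--Schwarz, whereas you derive the bound $(P^{-1})_{ii}\ge 1/P_{ii}$ from the scalar Schur complement; these are quantitatively the same inequality, established by different elementary means. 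For $B$, your Loewner-order fact $(M_S)^{-1}\preceq (M^{-1})_S$ applied to $M=G_\eta$ is exactly the paper's computation read from the dual side: the paper partitions $G_\theta$ into blocks $U, X, V$ and computes $B = V - X^T U^{-1} X \preceq V$ directly, which rests on the same block-inverse identity you cite. What your variant buys is a unified treatment---both halves follow from one Schur-complement principle---together with the sharper explicit constants $A_{II}\ge 4$ and $B_{JJ}\le 1/4$; what the paper's buys is a self-contained Gramian lemma that avoids block-inversion formulas for the $A$ half. Your closing remark about the asymmetry of the two bounds is also accurate and mirrors exactly how the paper splits its argument.
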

\begin{proof}
Assume the Fisher information matrix of $[\theta]$ to be:
$G_\theta=\left(
  \begin{array}{cc}
   U & X \\
   X^T & V \\
  \end{array}
  \right)
$, which is partitioned based on $I_\eta$ and $J_\theta$. Based on Proposition \ref{prop:fishermatrix_mix}, we have $A=U^{-1}$. Obviously, the diagonal elements of $U$ are all smaller than one. According to the succeeding Lemma \ref{lemma:diagOfmix}, we can see that the diagonal elements of $A$ (\textit{i.e}., $U^{-1}$) are greater than one.

Next, we need to show that the diagonal elements of $B$ are smaller than $1$. Using the Schur complement of $G_\theta$, the bottom-right block of $G_\theta^{-1}$, \textit{i.e}., $(G_\theta^{-1})_{J_\theta}$, equals to $(V-X^TU^{-1}X)^{-1}$. Thus, the diagonal elements of B: $B_{jj}=(V-X^TU^{-1}X)_{jj}<V_{jj}<1$. Hence, we complete the proof.
\end{proof}
\begin{lemma} \label{lemma:diagOfmix}
With a $l\times l$ positive definite matrix $H$, if $H_{ii}<1$, then $(H^{-1})_{ii}>1, \forall i\in \{1,2,\dots,l\}$.
\end{lemma}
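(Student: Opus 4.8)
The plan is to reduce the statement to the classical scalar inequality $H_{ii}\,(H^{-1})_{ii}\ge 1$, valid for any positive definite $H$, and then invoke the hypothesis $H_{ii}<1$ to upgrade it to the strict bound. Indeed, once $(H^{-1})_{ii}\ge 1/H_{ii}$ is in hand, the assumption $0<H_{ii}<1$ (positivity of $H_{ii}$ follows from positive definiteness) immediately yields $1/H_{ii}>1$ and hence $(H^{-1})_{ii}>1$ for every index $i$. So the entire content of the lemma lives in the single inequality $H_{ii}(H^{-1})_{ii}\ge 1$.

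To establish that inequality I would use the Cauchy--Schwarz inequality in the ordinary Euclidean inner product, exploiting the symmetric positive definite square roots $H^{1/2}$ and $H^{-1/2}$, which exist by the spectral theorem (note $H^{-1}$ is again symmetric positive definite). Writing $e_i$ for the $i$-th standard basis vector, I would apply Cauchy--Schwarz to the pair $H^{1/2}e_i$ and $H^{-1/2}e_i$:
\begin{equation*}
  1 = (e_i^T e_i)^2 = \bigl((H^{1/2}e_i)^T (H^{-1/2}e_i)\bigr)^2 \le (e_i^T H e_i)\,(e_i^T H^{-1} e_i) = H_{ii}\,(H^{-1})_{ii},
\end{equation*}
where I used $e_i^T H e_i = H_{ii}$ and $e_i^T H^{-1} e_i = (H^{-1})_{ii}$. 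The steps, in order, are: first observe that $H$ positive definite gives $H_{ii}>0$ and the existence of the symmetric factors $H^{\pm 1/2}$; second, apply Cauchy--Schwarz as above to obtain $H_{ii}(H^{-1})_{ii}\ge 1$; third, divide by $H_{ii}>0$ and use $H_{ii}<1$ to conclude $(H^{-1})_{ii}>1$.

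I do not expect a serious obstacle here; the only point requiring care is the clean justification of the symmetric square root and of the positive definiteness of $H^{-1}$, both of which are standard consequences of the spectral decomposition of a symmetric matrix. As an entirely elementary alternative that avoids square roots, one may permute index $i$ to the first position and use the Schur complement: the corresponding diagonal entry of the inverse equals $(H_{ii}-h^T\tilde H^{-1}h)^{-1}$, where $\tilde H$ is the remaining principal block and $h$ the off-diagonal column. Since $\tilde H$ is a principal submatrix of a positive definite matrix, $\tilde H^{-1}$ is positive definite and $h^T\tilde H^{-1}h\ge 0$, so the Schur complement is at most $H_{ii}$ (and strictly positive), giving again $(H^{-1})_{ii}\ge 1/H_{ii}>1$. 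Either route completes the proof, and thereby supplies the missing ingredient used in Lemma~\ref{prop:fishermatrix_mixDiagonal}.
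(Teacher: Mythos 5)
Your proof is correct and takes essentially the same route as the paper's: both reduce the lemma to the inequality $H_{ii}\,(H^{-1})_{ii}\ge 1$ obtained from Cauchy--Schwarz applied to a pair of vectors with unit inner product whose squared norms are $H_{ii}$ and $(H^{-1})_{ii}$ --- the paper realizes these as Gram vectors $v_i$ of $H$ and their dual basis $w_i$, while you take $H^{1/2}e_i$ and $H^{-1/2}e_i$, a concrete instance of the same factorization. (Your Schur-complement alternative is a genuinely different and equally valid elementary route, but your primary argument matches the paper's.)
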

\begin{proof}
Since $H$ is positive definite, it is a Gramian matrix of $l$ linearly independent vectors $v_1,v_2,\dots,v_l$, \textit{i.e}., $H_{ij}=\langle v_i,v_j\rangle$ ($\langle \cdot,\cdot \rangle$ denotes the inner product). Similarly, $H^{-1}$ is the Gramian matrix of $l$ linearly independent vectors $w_1,w_2,\dots,w_l$ and $(H^{-1})_{ij}=\langle w_i,w_j\rangle$. It is easy to verify that $\langle w_i,v_i\rangle=1, \forall i \in \{1,2,\dots,l\}$. If $H_{ii}<1$, we can see that the norm $\|v_i\|=\sqrt{H_{ii}}<1$. Since $\|w_i\| \times \|v_i\|\geq \langle w_i,v_i\rangle=1$, we have $\|w_i\|>1$. Hence, $(H^{-1})_{ii}=\langle w_i,w_i\rangle=\|w_i\|^2 >1$.
\end{proof}

\section{Proof of Proposition \ref{prop:SBMCIF}} \label{appendix:SBMCIF}
\begin{proof}
Let $M_{vbm}$ be the set of all probability distributions realized by SBM. \cite{amari92igbm} proves that the mixed-coordinates of the resulting projection $P$ on $M_{vbm}$ is $[\zeta]_{P}=(\eta^1_i, \eta^2_{ij}, 0,\dots,0)$, given the 2-mixed-coordinates of $q(x)$. $M_{vbm}$ is equivalent to the submanifold tailored by CIF, i.e. $[\zeta]_{2_t}$. The corollary follows from Proposition \ref{prop:GeometricView}.
\end{proof}

\section{Proof of Proposition \ref{prop:sbmmlcloseform}} \label{appendix:sbmmlcloseform}
\begin{proof}
Based on Equation \ref{eq:thetaforBM}, the coordinates $[\theta_{2+}]$ for VBM is zero: $\theta_{2+}=0$.
Next, we show that the stationary distribution $p(x;\xi)$ learnt by ML has the same $[\eta_{i}^1, \eta_{ij}^2]$ with $q(x)$.

For VBM, the $\frac{\partial E(x;\xi)}{\partial \xi}$ can be easily calculated from Equation (\ref{eq:energyBM}):
$$\begin{cases} \frac{\partial E(x;\xi)}{\partial U_{x_i x_j}}=x_i x_j, & for ~ U_{x_i x_j}\in \xi; \\
\frac{\partial E(x;\xi)}{\partial b_{x_i}}=x_i, & for ~ b_{x_i}\in \xi. \end{cases}$$

Thus, based on Equation \ref{eq:learnrulestochasticgradient}, the gradients for $U_{x_i x_j},b_{x_i}\in \xi$ are as follows:
\begin{equation}\label{eq:gradientloglikelihoodsbm}
\begin{cases}
  \frac{\partial \log p(\underline{x};\xi)}{\partial U_{x_i,x_j}} = \langle x_i x_j \rangle_0 - \langle x_i x_j \rangle_\infty = \eta_{ij}^2(q(x))-\eta_{ij}^2(p(x;\xi)) \\
  \frac{\partial \log p(\underline{x};\xi)}{\partial b_{x_i}} = \langle x_i\rangle_0 - \langle x_i\rangle_\infty = \eta_{i}^1(q(x))-\eta_{i}^1(p(x;\xi))
\end{cases}\nonumber
\end{equation}
where $\langle\cdot \rangle_0$ denotes the average using the sample data and $\langle\cdot \rangle_\infty$ denotes the average with respect to the stationary distribution $p(x;\xi)$.

Since VBM defines an $e$-flat submanifold $M_{vbm}$ of $S$ \cite{amari92igbm}, then ML converges to the unique solution that gives the best approximation $p(x;\xi)\in M_{vbm}$ of $q(x)$. When ML converges, we have $\Delta \xi\rightarrow 0$ and hence $\frac{\partial \log p(\underline{x};\xi)}{\partial \xi}\rightarrow 0$. Thus, we can see that ML converges to stationary distribution $p(x;\xi)$ that preserves coordinates $[\eta_{i}^1, \eta_{ij}^2]$ of $q(x)$. This completes the proof.
\end{proof}

\section{Proof of Proposition \ref{prop:monotonicdivergence}} \label{appendix:monotonicdivergence}
\begin{proof}
Since $p_i\in B$ and $p_{i+1}\in B$ is the projection of $q_{i+1}$, then $D[q_{i+1},p_i] \geq D[q_{i+1},p_{i+1}]$. Similarly, $q_{i+1}\in H_q$ and $q_{i+2}\in H_q$ is the projection of $p_{i+1}$, thus $D[q_{i+1},p_{i+1}] \geq D[q_{i+2},p_{i+1}]$. This completes the proof.
\end{proof}

\section{Proof of Proposition \ref{prop:hqtorbm}} \label{appendix:hptorbm}
\begin{proof}
Based on the definition of divergence, the following relation holds:
\begin{eqnarray}\label{eq:divergenceqp}
      &&D[q(x,h),p(x,h)] = D[q(x)q(h|x),p(x)p(h|x)] \nonumber \\
      &=& E_{q(x,h)}[log\frac{q(x)}{p(x)}+log\frac{q(h|x)}{p(h|x)}] \nonumber \\
      &=& D[q(x),p(x)] + E_{q(x)}[D[q(h|x),p(h|x)]] \nonumber
\end{eqnarray}
where $E_{q(x,h)}[\cdot]$ and $E_{q(x)}[\cdot]$ are the expectations taken over $q(x,h)$ and $q(x)$ respectively.

Therefore, the minimum divergence between $p(x,h;\xi_p)$ and $H_q$ is given as:
\begin{eqnarray}\label{eq:mindivergencehb}
      &&\!\!\!\!\!\!D(H_q,p(x,h;\xi_p))=\!\!\!\! \min_{q(x,h;\xi_q)\in H_q} D[q(x,h;\xi_q),p(x,h;\xi_p)] \nonumber \\
      &=&\min_{\xi_q}\{ D[q(x),p(x)] + E_q(x)[D[q(h|x;\xi_q),p(h|x;\xi_p)]]\} \nonumber \\
      &=& D[q(x),p(x)] + \min_{\xi_q}\{ E_{q(x)}[D[q(h|x;\xi_q),p(h|x;\xi_p)]]\} \nonumber \\
      &=& D[q(x),p(x)] \nonumber
\end{eqnarray}
In the last equality, the expected divergence between $q(h|x;\xi_q)$ and $p(h|x;\xi_p)$ vanishes if and only if $\xi_q = \xi_p$. This completes the proof.\footnote{Note that a similar path of proof is also used in Theorem 7 of \cite{amari92igbm}. Here, we reformulate the proof to derive the projection $\Gamma_H(p(x,h))$.}
\end{proof}

\section{Proof of Proposition \ref{prop:projectionRBMcloseform}} \label{appendix:projectionRBMcloseform}
\begin{proof}
First, we prove the uniqueness of the projection $\Gamma_B(q)$. From the $[\theta]$ of BM in Equation (\ref{eq:thetaforBM}), $B$ is an $e$-flat smooth submanifold of $S_{xh}$. Thus the projection is unique.

Second, in order to find the $p(x,h;\xi_p)\in B$ with parameter $\xi_p$ that minimizes the divergence between $q(x,h;\xi_q)\in H_q$ and $B$, the gradient descent method iteratively adjusts $\xi_p$ in the negative gradient direction that the divergence $D[q,p(\xi_p)]$ decreases fastest:
\begin{equation}\label{eq:gradientdescentMinimumDivergence}
  \triangle \xi_p=-\lambda \frac{\partial D[q,p(\xi_p)]}{\partial \xi_p} \nonumber
\end{equation}
where $D[q,p(\xi_p)]$ is treated as a function of BM's parameters $\xi_p$ and $\lambda$ is the learning rate. As shown in \cite{highorderBM}, the gradient descent method converges to the minimum of the divergence with proper choices of $\lambda$, and hence achieves the projection point $\Gamma_B(q)$.

Last, we show that the mixed coordinates $[\zeta^{xh}]_{\Gamma_B(q)}$ in Equation (\ref{eq:mixedcoordinatenewProjectionRBM}) is exactly the convergence point of the ML learning for BM.
For distributions on the manifold $S_{xh}$, the states of all hidden units is also visible and hence the BM with hidden units is equivalent to VBM by treating hidden units as visible ones. Based on Proposition \ref{prop:sbmmlcloseform}, ML converges to the projection point $\Gamma_B(q)$ with a stationary distribution $p(x,h;\xi_p)$ that preserves coordinates $[\eta_{x_i}^1, \eta_{h_j}^1, \eta_{x_ix_j}^2, \eta_{x_ih_j}^2, \eta_{h_ih_j}^2]$ of $q(x,h;\xi_q)$.
This completes the proof.
\end{proof}


\end{document}